\newtheorem{lemma}{Lemma}
\newtheorem{proof}{Proof}
\newtheorem{proposition}{Proposition}
\newtheorem{property}{Property}
\title{Deep Graph Neural Networks via Posteriori-Sampling-based Node-Adaptive \\ Residual Module}
\author{%
Jingbo Zhou$^{1,2}$\thanks{Equal contribution.}, Yixuan Du$^{2,3}$\footnotemark[1], Ruqiong Zhang$^{2,3}$\footnotemark[1], Jun Xia$^{1}$, Zhizhi Yu$^{3}$, \\
\textbf{Zelin Zang$^{1}$, Di Jin$^{3}$, Carl Yang$^{4}$, Rui Zhang$^{2}$\thanks{Corresponding author.}, Stan Z. Li$^{1}$\footnotemark[2]} 
\\
$^{1}$Westlake University, $^{2}$Jilin University, $^{3}$Tianjin University, $^{4}$Emory University
\\
\texttt{\{zhoujingbo, stan.zq.li\}@westlake.edu.cn}
}
\begin{document}

\maketitle

\begin{abstract}
Graph Neural Networks (GNNs), a type of neural network that can learn from graph-structured data through neighborhood information aggregation, have shown superior performance in various downstream tasks. However, as the number of layers increases, node representations become indistinguishable, which is known as over-smoothing. To address this issue, many residual methods have emerged. In this paper, we focus on the over-smoothing issue and related residual methods. Firstly, we revisit over-smoothing from the perspective of overlapping neighborhood subgraphs, and based on this, we explain how residual methods can alleviate over-smoothing by integrating multiple orders neighborhood subgraphs to avoid the indistinguishability of the single high-order neighborhood subgraphs. Additionally, we reveal the drawbacks of previous residual methods, such as the lack of node adaptability and severe loss of high-order neighborhood subgraph information, and propose a \textbf{Posterior-Sampling-based, Node-Adaptive Residual module (PSNR)}. We theoretically demonstrate that PSNR can alleviate the drawbacks of previous residual methods. Furthermore, extensive experiments verify the superiority of the PSNR module in fully observed node classification and missing feature scenarios. Our code
is available at \href{https://github.com/jingbo02/PSNR-GNN}{https://github.com/jingbo02/PSNR-GNN}.
\end{abstract}

\section{Introduction}
GNNs have emerged in recent years as the most powerful model for processing graph-structured data and have demonstrated exceptional performance across various fields, such as social networks \cite{DBLP:conf/kdd/PerozziAS14}, recommender systems \cite{DBLP:conf/www/Fan0LHZTY19}, and drug discovery \cite{DBLP:conf/nips/DuvenaudMABHAA15}. Through the message-passing mechanism that propagates and aggregates information of neighboring nodes, GNNs provide a general framework for learning information on graph structure. Despite the remarkable success, according to previous studies \cite{DBLP:conf/aaai/LiHW18,DBLP:conf/icml/XuLTSKJ18}, GNNs show significant performance degradation as the number of layers increase.
One of the main reasons for this situation is over-smoothing \cite{DBLP:conf/aaai/LiHW18,DBLP:conf/iclr/OonoS20,DBLP:conf/icml/XuLTSKJ18,DBLP:conf/iclr/KlicperaBG19}. 
Over-smoothing refers to the phenomenon in which node representations become increasingly similar to each other as GNNs recursively aggregate more neighborhood information.
This indistinguishability will inevitably degrade the performance of deep GNNs, restricting their ability to effectively model long-range dependencies among multi-hop neighbors.

Several methods have recently been proposed to alleviate over-smoothing in deep GNNs. According to \cite{DBLP:journals/corr/abs-2303-10993}, these methods fall into three categories: Normalization and Regularization \cite{Zhou_2020,Zhao_2019}, Change of GNN dynamics \cite{DBLP:conf/nips/EliasofHT21}, and Residual connections \cite{DBLP:conf/icml/XuLTSKJ18,DBLP:conf/iccv/Li0TG19}.
Among all of them, the residual-based method is inspired by the success of residual neural networks (ResNets) \cite{DBLP:conf/cvpr/HeZRS16} in computer vision. This type of method introduces a residual connection to the GNNs architecture. 
For example, JKNet \cite{DBLP:conf/icml/XuLTSKJ18} learns node representations by aggregating the outputs of all previous layers at the last layer. DenseGCN \cite{DBLP:conf/iccv/Li0TG19} concatenates the results of the current layer and all previous layers as the node representations of this layer. 
APPNP \cite{DBLP:conf/iclr/KlicperaBG19} uses the initial residual connection to retain the initial feature information with probability $\alpha$, and utilizes information aggregated at the current layer with probability $1-\alpha$. GCNII \cite{DBLP:conf/icml/ChenWHDL20} shares a similar framework with APPNP, and it further introduces an identical mapping to avoid overfitting.

In this paper, we study the over-smoothing issue in GNNs, with a particular emphasis on residual methods. First, we revisit the over-smoothing phenomenon of GNNs from the new perspective of overlapping neighborhood subgraphs and explain the essential reason why the residual method can alleviate the over-smoothing.
In essence, these methods mainly use multiple neighborhood subgraph aggregations to alleviate the indistinguishability of the single neighborhood subgraph aggregation, thereby improving the performance of the model. On this basis, we find that these residual methods often lack node adaptivity in utilizing multi-order neighborhood subgraph information, and at the same time, 
they still struggle to mitigate information loss when dealing with high-order neighborhood subgraphs, which hinders further improvement in the performance of deep GNNs. Although some residual methods, such as DenseGNN, can avoid
these drawbacks, they tend to introduce more parameters at deeper layers. This can lead to significant memory consumption and is prone to gradient explosion, limiting the scalability of the methods.

Considering these limitations, we propose a \textbf{Posteriori-Sampling-based Node-Adaptative Residual Module (PSNR)}. More specifically, this module introduces a graph encoder to learn the posterior distribution of the required residual coefficients for each node in different layers with minor overhead. And then, we can obtain the specific fine-grained node-adaptive residual coefficients by sampling from the distribution.
The contributions of this paper are as follows:
\begin{itemize}
    \item[$\bullet$] \emph{Perspective:} 
    We revisit the over-smoothing issue from a novel perspective of high-order neighborhood subgraph coincidences and explain why the residual methods can alleviate it. Through this lens, we reveal several significant drawbacks of prior residual methods that limit the performance and scalability of GNNs.
    \item[$\bullet$] \emph{Method:} We propose PSNR, a lightweight and model-agnostic module to mitigate the drawbacks of previous residual methods and provide theoretical justification for its advantages. 
    \item[$\bullet$] \emph{Experiments:} Extensive experiments verify that the PSNR module can effectively mitigate oversmoothing and further improve the performance of GNNs, especially in the case of missing feature that require deep GNNs.
\end{itemize}

\section{Related Work}
\subsection{Notations}
A connected undirected graph is represented by $\mathcal{G}=(\mathcal{V}, \mathcal{E})$, where $\mathcal{V}=\left\{v_1, v_2, \ldots, v_N\right\}$ is the set of $N$ nodes and $\mathcal{E} \subseteq \mathcal{V} \times \mathcal{V}$ is the edge set. The node features are represented in the matrix $\mathbf{H} \in \mathbb{R}^{N \times d}$, where $d$ represents
 the length of the feature. Let $\mathbf{A} \in\{0,1\}^{N \times N}$ denotes the adjacency matrix and $\mathbf{A}_{i j}=1$ only if an edge exists between nodes $v_i$ and $v_j$. $\mathbf{D}\in \mathbb{R}^{N \times N}$ is the diagonal degree matrix, where each element $d_i$ represents the number of edges connected to node $v_i$. $\tilde{\mathbf{A}}=\mathbf{A}+\mathbf{I}$, $\tilde{\mathbf{D}}=\mathbf{D}+\mathbf{I}$ represent the adjacency matrix and degree matrix with self-loop, respectively. 

\subsection{Graph Neural Networks}
A GNN layer updates the representation of each node via aggregating itself and its neighbors’ representations. Specifically, a layer’s output $\mathbf{H}^{\prime}$ consists of new representations $\mathbf{h}^{\prime}$ of each node computed as: 
\begin{equation} 
\mathbf{h}_i^{\prime}=\operatorname{f}_\theta\left(\mathbf{h}_i, \operatorname{AGGREGATE}\left(\left\{\mathbf{h}_j \mid v_j \in \mathcal{V}, (v_i, v_j) \in \mathcal{E} \right\}\right)\right),
\end{equation}
where $\mathbf{h}_i^{\prime}$ indicates the new representation of node $v_i$ and $\operatorname{f}_\theta 
(\cdot)$ denotes the update function. The difference between different GNNs lies in the update function $\mathbf{f_\theta}(\cdot)$ and the $\operatorname{AGGREGATE}(\cdot)$ function, which are also key to the performance of GNNs. Graph Convolutional Network (GCN) \cite{DBLP:conf/iclr/KipfW17} is a classical massage-passing GNNs follows layer-wise propagation rule:
\begin{equation}
\mathbf{H}_{k+1} 
=
\sigma\left(\tilde{\mathbf{D}}^{-\frac{1}{2}} \tilde{\mathbf{A}} \tilde{\mathbf{D}}^{-\frac{1}{2}} \mathbf{H}_{k} \mathbf{W}_{k}\right),
\label{eq_GCN}
\end{equation}
where ${\mathbf{H}_{k}}$ is the feature matrix of the $k$-th layer, $\mathbf{W}_{k}$ is a layer-specific learnable weight matrix, $\sigma(\cdot)$ denotes an activation function.

\subsection{Residual connection}
Several works have used residual connection to alleviate the over-smoothing issue. 
Common residual connections for GNNs are summarized in Table~\ref{res-gnn}, where ${\mathbf{H}_{k}}$ represents the output of the $k$-th layer, $\mathbf{W}_{k}$ is a learnable weight matrix for the $k$-th layer, $\alpha$ serves as a hyperparameter denoting the residual coefficient, and $\sigma(\cdot)$ denotes an activation function. Additionally, in DenseGNN, $\operatorname{AGG}$ represents a function with the concatenation of outputs from all previous layers as the input to the current layer, while in JKNet, $\operatorname{AGG}$ refers to the aggregation of all previous representations through concatenation, max-pooling, or LSTM-attention only at the final layer. Details can
be found in Appendix \hyperref[app_b]{B}.

\begin{table}
  \caption{Common residual connections for GNNs.}
  \label{tab:Residual}
  \centering
  \resizebox{0.8\linewidth}{!}{
  \begin{tabular}{lll}
\toprule
    \textbf{Residual Connection} & \textbf{Corresponding GCN}& \textbf{Formula}   \\
    \midrule
    \text{Res} &\text{ResGCN~\cite{DBLP:conf/iccv/Li0TG19}}&$\mathbf{H}_k = \mathbf{H}_{k-1} + \sigma\left(\tilde{\mathbf{D}}^{-\frac{1}{2}} \tilde{\mathbf{A}} \tilde{\mathbf{D}}^{-\frac{1}{2}} \mathbf{H}_{k-1} \mathbf{W}_{k-1}\right)$\\
      \text{InitialRes} &\text{APPNP~\cite{DBLP:conf/iclr/KlicperaBG19}}& $\mathbf{H}_k = \left(1-\alpha\right)\tilde{\mathbf{D}}^{-\frac{1}{2}} \tilde{\mathbf{A}} \tilde{\mathbf{D}}^{-\frac{1}{2}} \mathbf{H}_{k-1} + \alpha\mathbf{H}$ \\
      \text{Dense} &\text{DenseGCN~\cite{DBLP:conf/iccv/Li0TG19}}&$\mathbf{H}_k=\operatorname{AGG}(\mathbf{H},\mathbf{H}_{1}, \ldots,\mathbf{H}_{k-1})$\\
      \text{JK} &\text{JKNet~\cite{DBLP:conf/icml/XuLTSKJ18}}&$\mathbf{H}_{output}=\operatorname{AGG}(\mathbf{H}_1, \ldots,\mathbf{H}_{k-1})$\\
  \bottomrule
  \end{tabular}
  }
  \label{res-gnn}
\end{table}

\section{Why does the residual method alleviate over-smoothing?}
\label{section3}
In this section, we revisit over-smoothing from the perspective of overlapping high-order neighborhood subgraphs. Based on this, we elucidate the role of various residual methods in alleviating over-smoothing and identify their shortcomings.

\subsection{Revisit over-smoothing from the perspective of neighborhood subgraphs overlapping}
\begin{figure}[h]
    \centering
\includegraphics[width=\linewidth, page=1]{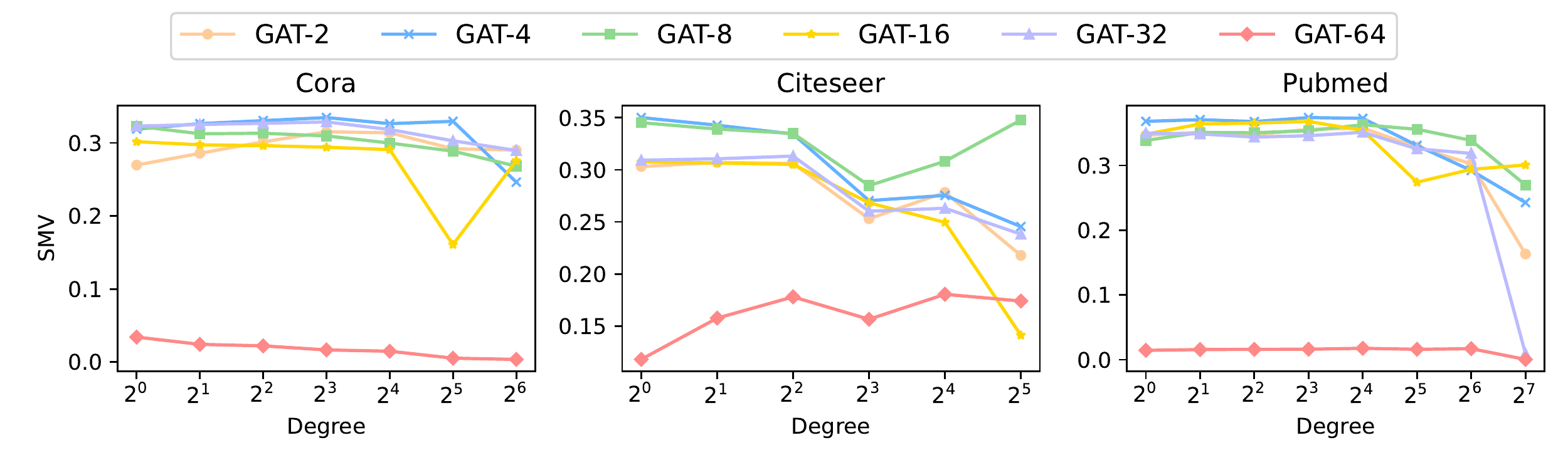}
    \caption{SMV for node groups of different degrees. More results are shown in Appendix \hyperref[app_c]{C}.}
    \label{fig:2_images}
    \vspace{-1em}
\end{figure}
For message-passing GNNs without the residual connection, the information domain of each node after $k$-layer aggregation is a corresponding $k$-order neighborhood subgraph.
Intuitively, the size of its $k$-order neighborhood subgraph grows exponentially as $k$
increases, leading to a significant increase in the overlap between the $k$-order neighborhood subgraphs of different nodes.
As a result, the aggregation result of different nodes on their respective $k$-order neighborhood subgraphs becomes indistinguishable.
This explanation can be partially validated from the perspective of node degrees. 
Considering nodes with high degrees tend to have larger neighborhood subgraph overlap, the correlation between neighborhood subgraph overlap and oversmoothiong can be validated if nodes with higher degree exhibit more pronounced over-smoothing.                              
To verify this point, we conduct experiments on three graph datasets: Cora, Citeseer, and Pubmed. Initially, nodes are grouped based on their degrees, with nodes having degrees falling within the range of $[2^i,2^{i+1})$ assigned to the $i$-th group. 
Subsequently, we perform aggregation with different layers of GCN and GAT and then calculate the degree of smoothing of the node representations within each group separately. The metric proposed in \cite{DBLP:conf/kdd/LiuGJ20} is used to measure the smoothness of the node representations within each group, namely smoothness metric value ($\operatorname{SMV}$), which calculates the average distances between the nodes within the group:
\begin{equation}
\operatorname{SMV}(\mathbf{X})=\frac{1}{N(N-1)} \sum_{i \neq j} \mathcal{D}\left(\mathbf{X}_{i,:}, \mathbf{X}_{j,:}\right),   
\end{equation}
where $\mathcal{D}(\cdot,\cdot)$ denotes the normalized Euclidean distance between two vectors:
\begin{equation}
\mathcal{D}(\mathbf{x}, \mathbf{y})=\frac{1}{2}\left\|\frac{\mathbf{x}}{\|\mathbf{x}\|}-\frac{\mathbf{y}}{\|\mathbf{y}\|}\right\|_2 . 
\end{equation}
From the definition, a smaller value of $\operatorname{SMV}$ indicates a greater similarity in node representations. We show the result of GAT in Figure~\ref{fig:2_images}. More results can be found in Appendix \hyperref[app_c]{C}.
It can be observed that the groups of nodes with higher degree tend to be more similar to each other within the group in different layers. This finding supports our claim.

\subsection{The role of residual method in alleviating over-smoothing}
After verifying the conclusion that neighborhood subgraph overlap leads to over-smoothing,
 a natural idea is to alleviate the overlap of the single neighborhood subgraph by utilizing multi-order neighborhood subgraph aggregations. In the following section, we show that the previous $k$-layer residual-based GNNs essentially represent different forms of utilizing neighborhood subgraph aggregations from 0 to $k$ orders.
\begin{table*}[h]
 \centering
   \caption{Utilization of neighborhood subgraphs by various residual methods.}
   \resizebox{0.6\linewidth}{!}{
	\begin{tabular}{c|c}
    \toprule
		\textbf{Model} & \textbf{Closed/Iterative form formulas}   \\
      \midrule 
		\text{ResGCN} &$\mathbf{H}_k = \sum^k_{j=0} \binom{j}{k}\mathbf{N}^j\mathbf{H}$ \\
		\text{APPNP}  &$
		\mathbf{H}_k = 
		\left( 1-\alpha\right)^k\mathbf{N}^k\mathbf{H} + \alpha
		\sum\limits_{j=0}^{k-1}\sum\limits_{i=0}^{j}
		\left( -1\right)^{j-i}
		\left(1-\alpha\right)^i\mathbf{N}^i\mathbf{H}
		$ \\
		\text{JKNet} &$\mathbf{H}_{k}=\operatorname{AGG}(\mathbf{N}\mathbf{H}, \ldots,\mathbf{N}^{k-1}\mathbf{H})$\\
		\text{DenseGCN} & 
		$ \mathbf{H}_k =
		\operatorname{AGG}(\mathbf{H}_{k-1},...,\mathbf{H}_1,\mathbf{H}_0)$\\
  \bottomrule
	\end{tabular}}
\label{tab:Residual2}
\end{table*}

In the rest of this paper, we take GCN, a classical residual-free message-passing GNN, as an example. Assuming that \textbf{H} is non-negative, the ELU function and the weight matrix can be ignored for simplicity. 
Combined with the formula of GCN in Eq.~\ref{eq_GCN}, the $k$-order neighborhood subgraph aggregation can be formulated as ${\mathbf{N}}^k\mathbf{H}$,
where $\mathbf{N} = \tilde{\mathbf{D}}^{-\frac{1}{2}}\tilde{\mathbf{A}}\tilde{\mathbf{D}}^{-\frac{1}{2}}$.
To show more intuitively how different residual models utilize
multi-order neighborhood subgraph aggregation $\mathbf{N}^k\mathbf{H}$, 
we rewrite their formula in Table \ref{tab:Residual2}.
Details of the derivation of the closed-form formula in this part are given in Appendix \hyperref[app_d]{D}. As can be observed, the output of GCN's residual-based variants contains multi-order matrix products that represent different order neighborhood subgraph aggregations from $0$ to $k$. There are two main ways to exploit them: \textbf{(1)} Summation, such as ResGCN and APPNP. Such methods employ linear summation over the aggregation of different order neighborhood subgraphs; \textbf{(2)} Aggregation functions, such as DenseGNN and JKNet. Such methods make direct and explicit exploitation of different order neighborhood subgraph aggregations through operations such as concatenation.

However, the utilization of multi-order neighborhood subgraph aggregations in these methods presents the following issues: Firstly, the summation methods all use a fixed coefficient to sum the neighborhood subgraph aggregations. Consequently, these methods inherently presume that the information from the neighborhood subgraph of the same order is equally important for different nodes, which lacks node adaptivity. 
Secondly, for ResGNN, APPNP, and JKNet, when the number of layers increases, the output of these methods still involves many high-order matrix products that are over-smoothed. This leads to severe information loss when aggregating high-order neighborhood subgraphs, which in turn degrades model performance at deeper layers. 
Although DenseGNN seems to alleviate the above issues to some extent, the recursive use of all previous neighborhood subgraph aggregation would introduce more parameters as the model deepens.
This increases memory consumption and raises the risk of gradient explosion at deeper layers.

\section{The Proposed Method PSNR}
\subsection{Methodology}
To solve the above issues, we propose a node-adaptive and lightweight residual module named PSNR. The motivation is to learn the adaptive residual coefficients for each node, thereby achieving fine-grained and node-level neighborhood subgraph aggregation to improve the performance of GNNs. However, directly learning these coefficients through backpropagation presents significant challenges. The primary challenge lies in the lack of transferability of learned coefficients. Specifically, in tasks such as semi-supervised node classification, nodes in the test and validation sets often cannot propagate information to the training nodes through multiple message-passing steps. Therefore, we cannot learn effective coefficients for these nodes during the training phase. 
\begin{wrapfigure}{r}{6cm}
	\centering
	\includegraphics[width=0.4\textwidth,page=1]{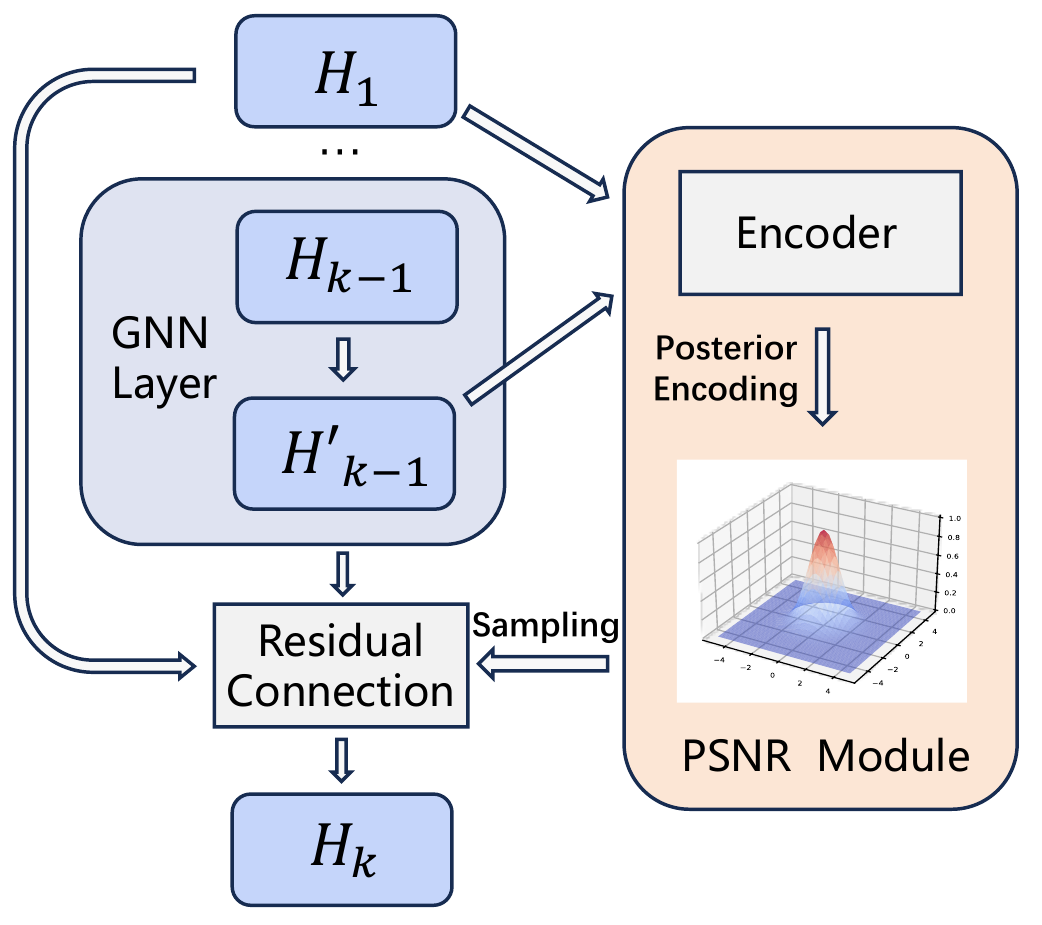}
	\caption{The framework of PSNR Module.}
	\label{fig:architecture}
\end{wrapfigure}
As a remedy, we regard node-level residual coefficients as hidden parameters. Our strategy involves estimating their posterior distribution $\mathbf{P}(\eta_k|\mathbf{A}, \mathbf{H}_k, k)$. In most cases, since the training, validation, and test sets originate from the same distribution, the posterior distribution learned from the training data possesses transferability to the validation and testing nodes. We can assume the posterior distribution to be Gaussian:
\begin{equation*}
	\mathbf{\eta}_{k}
	\sim \mathcal{N}
	(\mathbf{\mu}(\mathbf{A}, \mathbf{H}_k, k),{\mathbf{\sigma}^{2}(\mathbf{A}, \mathbf{H}_k, k)}).
\end{equation*}
A graph encoder can be used to model this distribution. This encoder leverages graph topology and node information as inputs. Subsequently, to enable back-propagation, we employ the reparameterization trick. This technique enables us to represent the sampling process from the aforementioned distribution as follows:
\begin{equation*}
	\mathbf{\eta}_{k} = \mathbf{\mu}(\mathbf{A}, \mathbf{H}_k, k) + \zeta \cdot \mathbf{\sigma}(\mathbf{A}, \mathbf{H}_k, k),  \;      
	\zeta \sim \mathcal{N}(0, 1).
\end{equation*}
Furthermore, we parameterize $\mathbf{\mu}(\mathbf{A}, \mathbf{H}_k, k)$ and $\mathbf{\sigma}(\mathbf{A}, \mathbf{H}_k, k)$ as an arbitrary GNN layer. 
While the posterior distributions of residual coefficients vary across different layers, we do not parameterize a specific encoder for each layer. Instead, we employ the same graph encoder and use the positional embedding generated from the layer number 
$k$ to differentiate the posterior distributions of residual coefficients for various layers. Consequently, the PSNR module can be formulated as follows:
\begin{align}
    \centering
	  \begin{aligned}
	&\;{\mathbf{H}^{\prime}_{k-1}} = 
	\operatorname{GraphConv}\left(\mathbf{H}_{k-1}\right) \\
	  & \;\mathbf{H}_k = \mathbf{H}_{1} + \phi(diag(\mathbf{\eta}_{k-1}))\left(\mathbf{H}_{1} - \mathbf{H}_{k-1}^{\prime}\right), \;	\mathbf{\eta}_{k-1}
	\sim \mathbf{\mathcal{N}}
	(\mathbf{\mu}_{k-1},{\mathbf{\sigma}_{k-1}}^{2})\\
	  & \;\mathbf{\mu}_{k}, \mathbf{\sigma}_{k} = \operatorname{GraphEncoder}_{\mu,  \sigma}(\mathbf{H}_{1} - \mathbf{H}_{k}^{\prime} + \gamma \operatorname{LayerEmb}(k)),\\
\end{aligned} 
\label{SNR_Defination}
\end{align}
where $\operatorname{GraphConv}(\cdot)$ is the $k$-th layer of any backbone GNN, $\mathbf{H}_k$ denotes the node representation matrix of the $k$-th layer, and ${\mathbf{H}^\prime_k}$ represents the output matrix of the $k$-th layer.
The first equation corresponds to the aggregation operation of the backbone GNN at the 
$k$-th layer.
$\eta_k$ represents the node-level residual coefficient at the $k$-th layer, where the element $\eta_k^{(i)}$ corresponds to the residual coefficient at the $i$-th node. In addition, $\eta_k$ follows a high-dimensional Gaussian distribution:
$\mathbf{\mathcal{N}}
(\mu_{k},{\sigma_{k}}^{2})$, and each time before each residual calculation, the distribution is first sampled to obtain the exact residual coefficients.
$\mu_{k}$ and ${\sigma_{k}}$ represent the mean and standard deviation of the distribution, respectively.
$diag(\eta_k)$ represents a diagonal matrix transformed from $\eta_k$, where the $i$-th diagonal element is precisely the $i$-th element of $\eta_k$.
$\phi(\cdot)$ represents the sigmoid function, which constrains the residual coefficient to $(0,1)$.
$\operatorname{GraphEncoder_{\mu, \sigma}}(\cdot)$ is a posterior encoder, which can be any graph convolution layer that is independent of the backbone GNN. $\operatorname{LayerEmb}(k)$ represents the positional embedding~\cite{vaswani2017attention} with layer number $k$ as input.
$\gamma$ is a learnable coefficient serving as the layer embedding factor. The analysis of the residual coefficients can be found in Appendix \hyperref[Appendix H]{H}.


Also, it is noteworthy that PSNR introduces randomness by sampling residual coefficients during both training and testing phases, thereby adding learnable random perturbations. This is different to other methods including DropEdge \cite{Rong_2019}, GRAND \cite{DBLP:conf/nips/FengZDHLXYK020} and DropMessage \cite{DBLP:conf/aaai/FangX0XY023} that only incorporate randomness during the training phase and primarily introduce perturbations to node features and graph structure. We provide the theoretical analysis for this design in Section \ref{sec:theory}. Additionally, we clarify the difference between PSNR and other subgraph-based methods in Appendix \hyperref[app_f]{F}.


\subsection{Theoretical justifications on
the advantages of the PSNR module}
\label{sec:theory}
In this section, we theoretically show that the PSNR module achieves finer-grained and node-adaptive neighborhood subgraph aggregations while avoiding the loss of high-order subgraph information.
Firstly, combining with Equation \ref{SNR_Defination}, the matrix form of the iterative  formula for PSNR-GCN is:
\begin{equation*}
	\mathbf{H}_{k}=
	\mathbf{H}_{1} + 
	\Lambda_{k-1}
	\left(
	\mathbf{H}_1 -
	\tilde{\mathbf{D}}^{-1 / 2} \tilde{\mathbf{A}} \tilde{\mathbf{D}}^{-1 / 2}\mathbf{H}_{k-1}
	\right).
\end{equation*}
For simplicity, we use $\Lambda_{k}$ to denote $diag(\eta_k)$.
Subsequently, based on this recursive form of formula, we derive the closed-form expression of PSNR-GCN as:
\begin{equation*}
\mathbf{H}_k  = 
\sum_{i=2}^{k-1} \prod_{j=i}^{k-1}\tilde{\mathbf{N}}_j
(
\mathbf{M}_i - 
\mathbf{M}_{i-1}
)
+ \prod_{i=1}^{k-1}\tilde{\mathbf{N}}_i\left( \mathbf{H}_1 + \mathbf{M}_1\right)
-\mathbf{M}_{k-1}, 
\end{equation*} 
where $\tilde{\mathbf{N}}_i = -\Lambda_{k-1} \mathbf{N}$, and 
$
\mathbf{M}_{k} = -\left( 
\Lambda_{k}
\mathbf{N} + \mathbf{I}\right)^{-1}\left(
\mathbf{I} + \Lambda_{k}
\right)\mathbf{H}_{1}
$.
The detailed derivation of this formula can be found in Appendix \hyperref[app_e]{E}. 
The first two terms consist of cumulative product terms of different orders, similar to the form of $\mathbf{N}^k\mathbf{H}$, thus approximating a new version of neighborhood subgraph aggregation. Additionally, the formula involves the aggregation of all neighborhood subgraphs from 1 to $k$ orders. This ensures that our method, like other residual methods, can comprehensively utilize multi-order neighborhood subgraph aggregations to enhance performance.
Furthermore, since $\Lambda_k$ is a diagonal matrix computed by a learnable posterior encoder with graph structure, node feature and layer number as input, the neighborhood subgraph aggregation of PSNR-GCN is fine-grained and node-adaptive. This sets it apart from methods like ResGCN and APPNP.

Beyond that, we can also demonstrate that the PSNR module can reduce the information loss of high-order subgraph aggregation, thereby further improving the performance of GNNs at deeper layers.
Specifically, we aim to prove that as the order $k$ increases, the smoothing rate of the cumulative product terms in PSNR is slower than that of the matrix power terms in other methods. Since we need to analyse the smoothing rate, which involves analyzing the asymptotic behavior of cumulative product terms or matrix power terms as the order increases, we can generalize the problem setup. Therefore, we only need to prove the following proposition to support our claim.
\begin{proposition}
\label{propos1}
Let $\mathcal{S} = \{\mathbf{S}_j = \Lambda_j \mathbf{N} | j \in \mathbb{N}_0 \}$, where $\Lambda_j$ represents a random diagonal matrix with each diagonal element $\Lambda_{j, ii}$ satisfying $0 < \Lambda_{j, ii} < 1$, and let $\mathbf{X}$ be any feature matrix. Then, as the order $k$ increases, the product of elements in the set $\mathcal{S}$ and the matrix $\mathbf{X}$, denoted as $\mathbf{X}^{(k)} = \prod_{i=1}^{k} \mathbf{S}_i \mathbf{X}$, converges to a rank-one matrix with identical rows slower than $\mathbf{X}_{GCN}^{(k)} = \mathbf{N}^k\mathbf{X}$.
\end{proposition}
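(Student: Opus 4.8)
\emph{Proof strategy.} The plan is to measure ``closeness to a rank-one matrix with proportional rows'' by the share of energy lying outside the dominant eigenspace of $\mathbf{N}$, and then to compare how fast that share decays in the two cases. First I would record the spectral structure of $\mathbf{N}=\tilde{\mathbf{D}}^{-1/2}\tilde{\mathbf{A}}\tilde{\mathbf{D}}^{-1/2}$: since the graph is connected and $\tilde{\mathbf{A}}$ carries self-loops, $\mathbf{N}$ is symmetric with a \emph{simple} top eigenvalue $\lambda_1=1$, eigenvector $\mathbf{v}_1\propto\tilde{\mathbf{D}}^{1/2}\mathbf{1}>0$, and every other eigenvalue obeying $|\lambda_i|\le\lambda_\star:=\max(|\lambda_2|,|\lambda_n|)<1$. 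Writing $\mathbf{P}=\mathbf{v}_1\mathbf{v}_1^{\top}$ and $\mathbf{P}^{\perp}=\mathbf{I}-\mathbf{P}$, a matrix is fully over-smoothed (rank one, rows pairwise proportional) exactly when $\mathbf{P}^{\perp}\mathbf{Y}=0$, so for $\mathbf{Y}$ with $\mathbf{P}\mathbf{Y}\neq0$ the quantity $r(\mathbf{Y})=\|\mathbf{P}^{\perp}\mathbf{Y}\|_F/\|\mathbf{P}\mathbf{Y}\|_F$ is a natural progress measure and ``converges slower'' means ``$r$ decays slower''. (That $r$ is well defined along both sequences follows from $\mathbf{X}\ge0$, $\mathbf{v}_1>0$, and the fact that $\mathbf{N}^{k}$ and $\prod_i\Lambda_i\mathbf{N}$ are entrywise positive for $k$ past the graph diameter.) The GCN side is then immediate: from $\mathbf{N}\mathbf{P}=\mathbf{P}=\mathbf{P}\mathbf{N}$ and $\|\mathbf{N}\mathbf{P}^{\perp}\|_{\mathrm{op}}=\lambda_\star$ we get $\mathbf{P}(\mathbf{N}^{k}\mathbf{X})=\mathbf{P}\mathbf{X}$ while $\|\mathbf{P}^{\perp}\mathbf{N}^{k}\mathbf{X}\|_F\le\lambda_\star^{k}\|\mathbf{P}^{\perp}\mathbf{X}\|_F$, hence $r(\mathbf{N}^{k}\mathbf{X})\le\lambda_\star^{k}r(\mathbf{X})$, and generically $r(\mathbf{N}^{k}\mathbf{X})=\Theta(|\lambda_2|^{k})$ --- geometric collapse at ratio $|\lambda_2|$.

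For the PSNR product $\mathbf{B}_k:=\prod_{i=1}^{k}\Lambda_i\mathbf{N}$ I would track the pair $\mathbf{Y}_k^{\parallel}=\mathbf{P}\mathbf{B}_k\mathbf{X}$, $\mathbf{Y}_k^{\perp}=\mathbf{P}^{\perp}\mathbf{B}_k\mathbf{X}$ through the one-step identity $\mathbf{B}_k=\Lambda_k\mathbf{N}\mathbf{B}_{k-1}$. Using that $\mathbf{N}$ fixes the $\mathbf{P}$-block and contracts the $\mathbf{P}^{\perp}$-block by at most $\lambda_\star$, and splitting $\Lambda_k$ into its four $\mathbf{P}/\mathbf{P}^{\perp}$ blocks, one obtains a linear recursion whose diagonal blocks are multiplication by the scalar $a_k:=\mathbf{v}_1^{\top}\Lambda_k\mathbf{v}_1\in(0,1)$ on $\mathbf{Y}^{\parallel}$ and an operator of norm at most $\|\Lambda_k\|_{\mathrm{op}}\lambda_\star$ on $\mathbf{Y}^{\perp}$, plus two off-diagonal leak terms. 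The decisive structural observation is that PSNR \emph{also attenuates the dominant (smooth) direction}, by the factor $a_k<1$, whereas GCN keeps it at $1$; and since $a_k=\sum_i\Lambda_{k,ii}v_{1,i}^{2}$ is a strict convex combination of the $\Lambda_{k,ii}$ (all $v_{1,i}>0$), it satisfies $a_k\le\|\Lambda_k\|_{\mathrm{op}}$, with the inequality strict unless all diagonal entries coincide --- a probability-zero event for the continuously distributed $\Lambda_k$. Hence, modulo the leak terms, $r(\mathbf{B}_k\mathbf{X})\approx\big(\|\Lambda_k\|_{\mathrm{op}}\lambda_\star/a_k\big)\,r(\mathbf{B}_{k-1}\mathbf{X})$ with per-step ratio at least $\lambda_\star$ --- never faster than the GCN rate, and strictly slower once the $\Lambda_k$ are non-constant --- which is exactly the claim.

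The main obstacle is the two off-diagonal leak terms $\mathbf{P}\Lambda_k\mathbf{N}\mathbf{P}^{\perp}$ and $\mathbf{P}^{\perp}\Lambda_k\mathbf{P}$: they couple the blocks, so neither $\mathbf{Y}^{\parallel}$ nor $\mathbf{Y}^{\perp}$ evolves on its own and a crude ratio bound could be spoiled by cancellation. I would handle this by noting that (i) leak from the large $\mathbf{P}$-block into the $\mathbf{P}^{\perp}$-block can only \emph{slow} the decay of $r$, so it is harmless in a lower bound; (ii) the reverse leak has norm of order $\lambda_\star\|\Lambda_k\|_{\mathrm{op}}\|\mathbf{Y}_{k-1}^{\perp}\|_F$, which is of lower order once $r$ is small and therefore perturbs the ratio only by $1+o(1)$; and (iii) I would run the estimate in expectation (or with high probability) over the $\Lambda_i$ to keep $a_i$ bounded away from $\|\Lambda_i\|_{\mathrm{op}}$ and $\|\mathbf{P}^{\perp}\mathbf{X}\|_F$ away from $0$, dispatching the degenerate configurations ($\mathbf{P}^{\perp}\mathbf{X}=0$; all $\Lambda_{i,ii}$ equal) separately. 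A slightly cleaner equivalent packaging replaces $\|\mathbf{P}^{\perp}\mathbf{Y}\|_F^{2}$ by the Dirichlet energy $E(\mathbf{Y})=\operatorname{tr}(\mathbf{Y}^{\top}(\mathbf{I}-\mathbf{N})\mathbf{Y})$, which vanishes on the same set of matrices, and compares $E(\mathbf{N}^{k}\mathbf{X})$ with $E(\mathbf{B}_k\mathbf{X})$ via the same block decomposition.
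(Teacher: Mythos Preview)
Your route is genuinely different from the paper's. The paper does not give a self-contained argument at all: it observes that, under the additional assumption that the sigmoid outputs are bounded below by some $\epsilon>0$, each $\mathbf{S}_i=\Lambda_i\mathbf{N}$ satisfies Property~\ref{property}, and then simply invokes the result of Wu, Ajorlou, Wu, and Jadbabaie (NeurIPS 2023), who prove via joint-spectral-radius / ergodic-coefficient techniques that products of GAT-type row-stochastic matrices obeying Property~\ref{property} cannot over-smooth faster than $\mathbf{N}^k$. So the paper's ``proof'' is a one-line reduction to an external theorem; your proposal instead attempts a direct spectral comparison by tracking the energy split across $\mathbf{P}=\mathbf{v}_1\mathbf{v}_1^\top$ and $\mathbf{P}^\perp$. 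What your approach buys is transparency about the mechanism (the key fact that PSNR attenuates the dominant direction by $a_k=\mathbf{v}_1^\top\Lambda_k\mathbf{v}_1<1$ while GCN fixes it); what the paper's approach buys is a rigorous conclusion with no new work, since the hard analysis is outsourced.

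That said, your sketch has a direction error at the crucial step. You write $r(\mathbf{B}_k\mathbf{X})\approx(\|\Lambda_k\|_{\mathrm{op}}\lambda_\star/a_k)\,r(\mathbf{B}_{k-1}\mathbf{X})$ and conclude the per-step factor is at least $\lambda_\star$ because $a_k\le\|\Lambda_k\|_{\mathrm{op}}$. But $\|\Lambda_k\|_{\mathrm{op}}\lambda_\star$ is an \emph{upper} bound on $\|\mathbf{P}^\perp\Lambda_k\mathbf{N}\mathbf{P}^\perp\|_{\mathrm{op}}$, so what you have derived (ignoring leaks) is $r(\mathbf{B}_k\mathbf{X})\le(\|\Lambda_k\|_{\mathrm{op}}\lambda_\star/a_k)\,r(\mathbf{B}_{k-1}\mathbf{X})$, an upper bound on the ratio, not the lower bound you need for ``slower convergence.'' The diagonal block $\mathbf{P}^\perp\Lambda_k\mathbf{N}\mathbf{P}^\perp$ could in principle contract much harder than $\lambda_\star$ (even be nilpotent on some subspace), so nothing in the diagonal estimate prevents faster-than-GCN decay. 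Your item (i) --- the leak $\mathbf{P}^\perp\Lambda_k\mathbf{P}$ from the large parallel block into $\mathbf{P}^\perp$ --- is in fact where the entire lower bound has to come from, not a harmless side term; but as written you only say it ``can only slow the decay,'' without quantifying it against possible cancellation with the diagonal contribution. Making that leak term do the work rigorously is exactly the content of the joint-spectral-radius machinery the paper cites, so either you need to substantially expand (i) into a real lower bound (e.g.\ by showing $\|\mathbf{P}^\perp\Lambda_k\mathbf{v}_1\|$ is bounded away from zero with high probability and dominates), or fall back on the cited reference.
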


Since the diagonal elements of the diagonal matrix $\Lambda$ are all results of the sigmoid function, we can assume that they have a lower bound $\epsilon > 0$. Therefore, in this setting, each element $\mathbf{S}_i$ in the set $\mathcal{S}$ is a row-stochastic matrix satisfying the following property.
\begin{property}
    \label{property}
    For a row-stochastic matrix $\mathbf{S}_k$, there exists an $\epsilon > 0$ satisfying the following conditions:\\
1. $\epsilon \leq \mathbf{S}_{k,ij} \leq 1$, if $(i,j) \in \mathcal{E}$,\\
2. $\mathbf{S}_{k,ij} = 0$, if $(i,j) \notin \mathcal{E}$.
\end{property}
We can refer to the conclusion in \cite{DBLP:conf/nips/WuAWJ23}. \cite{DBLP:conf/nips/WuAWJ23} describes the attention matrix of GAT as a row-stochastic matrix satisfying Property \ref{property}. Leveraging mathematical tools such as joint spectral radius from the perspective of dynamical systems, this work proves that the convergence rate of GAT is bounded below by GCN. Due to all the matrix $\mathbf{S}_i$ from $\mathcal{S}$ also satisfies Property \ref{property}, this conclusion can be borrowed to prove Proposition \ref{propos1}, thereby demonstrating that the PSNR module can alleviate the loss of high-order neighborhood subgraph information.
Furthermore, this conclusion also explains the introduction of randomness during both the training and testing phases. This random perturbation further reduces the smoothing rate, thereby enhancing the model's performance.

\subsection{Complexity analysis}
We take PSNR-GCN as an example to provide a complexity analysis of the PSNR module.
The time complexity of a vanilla GCN layer mainly comes from the matrix multiplication of \textbf{N} and \textbf{H}, hence its complexity is \emph{O}$(n^2d)$. And the main computational parts of PSNR module are the computation of mean and standard deviation, sampling of $p_k^{(i)}$, scalar multiplication and matrix addition, which correspond to a complexity of \emph{O}$(n^2d)$, \emph{O}($n$), \emph{O}($nd$), and \emph{O}($nd$), respectively. Thus the time complexity of the PSNR module is \emph{O}$(n^2d)$ and the time complexity of a GCN layer equipped the PSNR module is \emph{O}($n^2d$). As for space complexity, PSNR module needs to store the computed mean and variance for each node, i.e., \( O(2n) \), which can be approximately considered as \( O(n) \). Section \ref{ls} compares memory consumption with other residual methods on large graphs.





\section{Experiment}
In this section, we assess the performance of the PSNR module in comparison to other methods and answer the following research questions (\textbf{RQ}): \textbf{RQ1.} How well does the PSNR alleviate oversmoothing? \textbf{RQ2.} How does PSNR perform compared to other baseline when used
        with different backbone?
\textbf{RQ3.} Can  PSNR enable deeper networks to perform better under the missing feature scenario?
\textbf{RQ4.} How scalable is the PSNR on large graph datasets?
\subsection{Datasets}
We conducted experiments on ten real-world datasets, including three citation network datasets, i.e., Cora, Citeseer, Pubmed \cite{DBLP:journals/aim/SenNBGGE08}, two web network datasets, i.e., Chameleon and Squirrel  \cite{DBLP:journals/compnet/RozemberczkiAS21}, co-author/co-purchase network datasets, i.e., Coauthor-CS~\cite{shchur2018pitfalls}, Amazon-Photo~\cite{shchur2018pitfalls} and three larger datasets, i.e., 
Flickr~\cite{mcauley2012image}, Coauthor-Physics~\cite{shchur2018pitfalls} and Ogbn-arxiv \cite{DBLP:journals/corr/abs-2005-00687}. More
details of these datasets and data-splitting procedures can be found in Appendix \hyperref[app_g]{G}.

\subsection{Baselines and experimental settings} 
\label{sec:resourse}
We consider two fundamental GNNs, GCN \cite{DBLP:conf/iclr/KipfW17} and GAT \cite{Velickovic_2017}. For GCN, we test the performance of PSNR-GCN and its residual variant models, including ResGCN \cite{DBLP:conf/iccv/Li0TG19},  DenseGCN \cite{DBLP:conf/iccv/Li0TG19}, GCNII \cite{DBLP:conf/icml/ChenWHDL20} and JKNet \cite{DBLP:conf/icml/XuLTSKJ18}.
For GAT, we directly equip it with the following residual module: Res-GAT, InitialRes-GAT, Dense-GAT, JK-GAT and PSNR-GAT. And we adopt the GraphSAGE layer as the $\operatorname{GraphEncoder}$ of the PSNR module. The impact of different graph encoders on the experiments can be found in Appendix \hyperref[appi]{I}. In addition, we compare three recent representative methods belonging to different categories aimed at alleviating oversmoothing issues: DropMessage \cite{DBLP:conf/aaai/FangX0XY023} for the drop category, DeProp \cite{DBLP:conf/kdd/LiuHJLL23} for the norm category, and Half-Hop \cite{DBLP:conf/icml/AzabouGTLSLVVD23} for graph data processing. For the missing feature setting, we also conduct comparisons with several classical oversmoothing mitigation techniques, including BatchNorm \cite{DBLP:conf/icml/IoffeS15}, PairNorm \cite{Zhao_2019}, DGN \cite{Zhou_2020}, Decorr \cite{DBLP:conf/kdd/JinL0AT22}, DropEdge \cite{Rong_2019}.
For all baselines, the linear layers in the models are initialized with a standard normal distribution, and the convolutional layers are initialized with Xavier initialization. The Adam optimizer \cite{DBLP:journals/corr/KingmaB14} is used for training.  Experimental results are obtained from the server with four
core Intel(R) Xeon(R) Platinum 8358 CPUs @ 2.60GHZ,
one NVIDIA A100 GPU (80G), and models and datasets
used in this paper are implemented using the Deep Graph
Library (DGL) and Pytorch Geometric (PyG). Further details on the specific parameter settings can be found in Appendix \hyperref[app_g]{G}.

\subsection{Effectiveness in mitigating over-smoothing (RQ1)}
\label{exp:oversmoothing}
In this section, we aim to assess whether the PSNR module can mitigate the oversmoothing phenomenon in deep GNNs. We select representative datasets Cora, Amazon-Photo, and Chameleon. Using GCN as the backbone network, we compare our method against several residual methods: ResGCN, GCNII, JKNet, and DenseGNN. We set the number of layers to 2, 4, 8, 16, 32, 64 and test on ten random splits, with the average accuracy serving as the final result. The experimental results are depicted in Figure \ref{exp:fig1}.
\begin{figure}[htbp]
\centering\includegraphics[width=\textwidth]{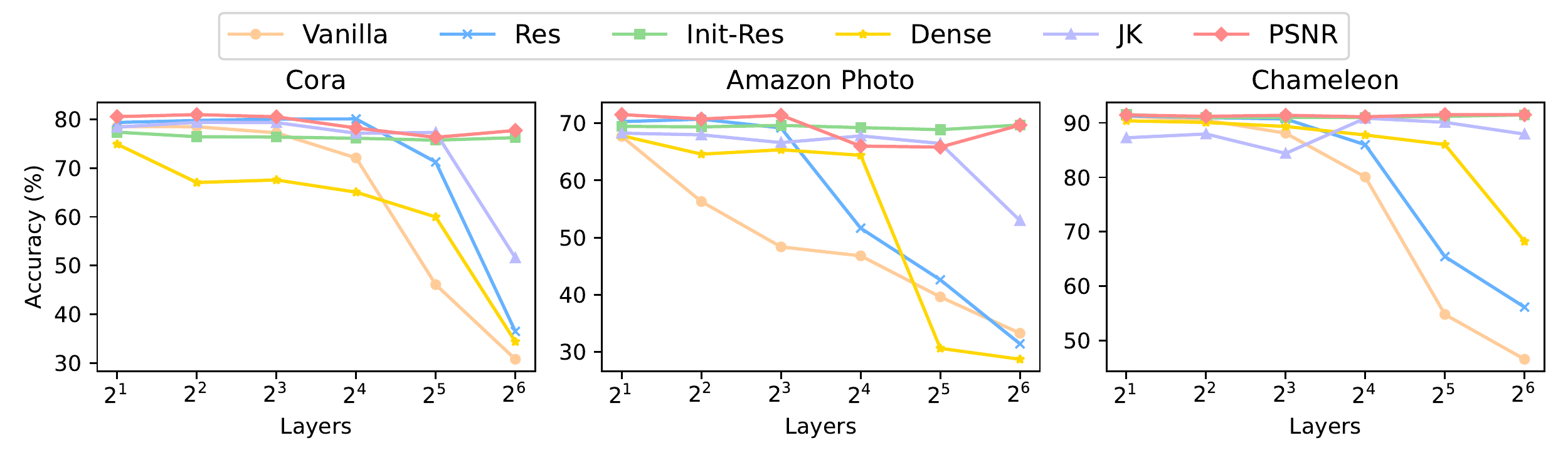}
	\caption{Different residual methods' effectiveness in mitigating over-smoothing.}
	\label{exp:fig1}
 \vspace{-0.3cm}
\end{figure}
Consistent with the analysis in the main text, most methods can alleviate over-smoothing, but at deeper layers, such as 64 layer, over-smoothing still occurs. In contrast, compared to other residual methods, PSNR maintains stable performance even at deeper layers, demonstrating remarkable effectiveness in mitigating over-smoothing in deep GNNs. This is attributed to PSNR module can effectively reduce the loss of high-order neighborhood subgraph information. It is noteworthy that among the various residual methods, another initial residual method GCNII also alleviate over-smoothing well. However, the subsequent experiment  will reveal that although the performance of GCNII remains relatively stable with varying layers, it leads to a decrease in overall performance.

\subsection{Fully observed feature setting (RQ2)}
\label{exp:full}
The PSNR module can effectively address the performance degradation of GNNs at deeper layers, but can it further enhance the overall performance of the model? In this section, we comprehensively evaluate the PSNR module across a wider range of datasets. Under the fully observed feature setting, we set the number of layers to 2, 4, 8, 16, 32, 64, and conduct experiments on ten random splits for each dataset, taking the average accuracy as the final result for each layer. The results of all layers of the model can be found in Appendix \hyperref[app_a]{A}. To evaluate the overall performance, we record the best results of each model in all layers for each dataset in Table \ref{ssnc}. 
\begin{table}
  \caption{Summary of classification accuracy (\%) results on real-world datasets. The best results are in bold, and the second best results are underlined.}
  \label{ssnc}
  \centering
  \resizebox{0.9\linewidth}{!}{
  \begin{tabular}{ccccccc|c}
\hline
\hline
\textbf{Method} & \textbf{Cora} & \textbf{Citeseer} & \textbf{CS} & \textbf{Photo} & \textbf{Chameleon} & \textbf{Squirrel} & \textit{Avg. Rank}\\
\midrule
GCN & 78.56$\pm$1.57 & 66.00$\pm$2.37 & 90.19$\pm$0.83 & 91.22$\pm$0.73 & 67.70$\pm$0.17 & 47.43$\pm$1.31 &6.67\\
ResGCN & 80.11$\pm$0.98 & 66.40$\pm$2.29 & 90.86$\pm$0.99 & 91.31$\pm$0.85 & \underline{70.70$\pm$1.54} & \underline{52.43$\pm$1.72} &\underline{3.67}\\
GCNII & 77.40$\pm$1.43 & 66.69$\pm$1.95 & 90.34$\pm$0.78 & \underline{91.46$\pm$0.84} & 69.69$\pm$1.36 & 46.13$\pm$1.59 &5.00\\
DenseGCN & 74.94$\pm$2.46 & 62.54$\pm$2.56 & 90.28$\pm$1.06 & 90.38$\pm$1.02 & 67.82$\pm$1.27 & 49.57$\pm$1.58 &7.17\\
JKNet & 79.42$\pm$1.48 & 65.49$\pm$2.32 & 90.94$\pm$0.89 & 90.85$\pm$1.19 & 68.26$\pm$1.15 & 49.87$\pm$1.44 &5.17\\
Half-Hop-GCN & \underline{80.74$\pm$1.80} & \underline{67.76$\pm$2.19} & 90.30$\pm$1.83 & 89.32$\pm$1.17 & 60.07$\pm$2.34 & 42.78$\pm$1.01 &6.33\\
DropMessage-GCN & 80.00$\pm$2.22 & 67.66$\pm$1.84 & 90.64$\pm$2.34 & 91.40$\pm$0.90 & 65.91$\pm$2.43 & 45.71$\pm$1.76 &4.83\\
DeProp-GCN & 79.52$\pm$2.63 & 67.24$\pm$2.02 & \underline{90.98$\pm$2.04} & 91.38$\pm$1.33 & 61.51$\pm$1.40 & 44.07$\pm$1.07 &5.17\\
PSNR-GCN & \textbf{81.01$\pm$1.63 }& \textbf{68.06$\pm$2.12} & \textbf{91.23$\pm$1.00} & \textbf{91.51$\pm$0.69} & \textbf{71.51$\pm$1.90} & \textbf{54.95$\pm$1.73} &\textbf{1.00}\\
\midrule
GAT &79.21$\pm$1.31 &67.12$\pm$1.59 &89.43$\pm$1.62 &89.64$\pm$0.84 &68.04$\pm$1.36 &47.93$\pm$1.99 &6.00\\
Res-GAT & 79.72$\pm$1.94 & 67.19$\pm$1.91 & 89.92$\pm$0.86 & 91.40$\pm$0.74 & \textbf{72.66$\pm$0.94} & \underline{55.98$\pm$2.12} &\underline{2.83}\\ 
Init-Res-GAT &79.67$\pm$1.71 & 66.84$\pm$2.52 & 89.61$\pm$1.02 & \underline{91.53$\pm$1.06} &69.89$\pm$1.81 & 51.29$\pm$1.42 &4.17\\
JK-GAT &80.04$\pm$1.61 &65.83$\pm$2.21 &90.10$\pm$0.94  &90.82$\pm$1.24 &67.98$\pm$1.71 &50.43$\pm$1.45 &4.50\\
Dense-GAT & 73.39$\pm$1.52 & 61.23$\pm$2.53 & 89.72$\pm$0.86 & 88.92$\pm$1.66 & 67.36$\pm$1.95 & 50.25$\pm$0.88 &7.00\\ 
DropMessage-GAT & \underline{80.36$\pm$1.92} & \underline{67.82$\pm$2.04} & \textbf{90.67$\pm$1.68 }& 90.98$\pm$0.90 & 63.23$\pm$2.23 & 45.23$\pm$1.35 &4.33\\ 
DeProp-GAT & 76.00$\pm$2.08 & 61.16$\pm$3.50 & 87.34$\pm$1.42 & 89.76$\pm$1.52 & 64.23$\pm$3.22 & 46.29$\pm$3.24 &8.00\\ 
Half-Hop-GAT &77.24$\pm$1.69 & 66.74$\pm$1.98 & 89.66$\pm$1.45 &89.92$\pm$0.77 &62.86$\pm$2.04 & 47.84$\pm$4.36 &6.83\\
PSNR-GAT & \textbf{80.47$\pm$1.62} & \textbf{68.01$\pm$2.14} &\underline{90.38$\pm$1.21} & \textbf{91.64$\pm$0.61} & \underline{72.24$\pm$1.69} & \textbf{60.85$\pm$1.61} &\textbf{1.33}\\ 
\hline
\hline
\end{tabular}
}
\vspace{-0.3cm}
\end{table} 

As can be observed from the Table \ref{ssnc}, PSNR outperforms all baselines in most cases. For example, compared to the vanilla model, PSNR improves the test accuracy on the Squirrel dataset by 7.52\% and 12.92\%, respectively. Compared with the vanilla GCN and GAT, the proposed PSNR can significantly improve the performance under the fully observed feature setting.

\begin{table}
  \caption{Test accuracy (\%) on missing feature setting. The best results are in bold and the second best results are underlined.}
  \label{ssnc-mv}
  \centering
  \resizebox{0.8\linewidth}{!}{
  \begin{tabular}{ccccccccccccc}
  \hline
\hline
      &\multicolumn{6}{c}{\textbf {GCN}} &\multicolumn{6}{c}{\textbf {GAT}} \\
      \cmidrule(r){1-1}
      \cmidrule(r){2-7}
      \cmidrule(r){8-13}
      \multirow{2}{*}{\textbf {Module}} &\multicolumn{2}{c}{\textbf{Cora}} & \multicolumn{2}{c}{\textbf{Citeseer}} & \multicolumn{2}{c}{\textbf{Pubmed}}&\multicolumn{2}{c}{\textbf{Cora}} & \multicolumn{2}{c}{\textbf{Citeseer}} & \multicolumn{2}{c}{\textbf{Pubmed}} \\
      \cmidrule(r){2-3}
      \cmidrule(r){4-5}
      \cmidrule(r){6-7}
      \cmidrule(r){8-9}
      \cmidrule(r){10-11}
      \cmidrule(r){12-13}
      &\textbf{Acc}&\#K & \textbf{Acc}& \#K & \textbf{Acc} &\#K&\textbf{Acc}&\#K & \textbf{Acc}& \#K & \textbf{Acc} &\#K\\
      \midrule 
      None &57.3 &3&44.0  &6&36.4 &4&50.1  & 2&40.8  & 4&38.5  &4\\
      BatchNorm &71.8 &20&45.1 &25&70.4 &30&72.7  & 5&48.7  & 5&60.7  &4\\
      PairNorm &65.6  & 20&43.6 &25&63.1 &30&68.8 & 8&50.3  & 6&63.2  &20\\
      
      DGN &76.3 & 20 &50.2 & 30&72.0 &30& 75.8 & 8 &54.5 & 5&72.3  &20\\
      DeCorr &73.8  & 20&49.1  & 30&73.3  &15&72.8  & 15&46.5  & 6&72.4  &15\\
      DropEdge &67.0  & 6&44.2  & 8&69.3  &6&67.2  & 6&48.2  & 6&67.2  &6\\
      Res & \underline{76.8} & \underline{8} & 60.4 & 10 & 76.6 & 6 &76.5 &8 &60.5 & 6 &\underline{76.9} &\underline{8}\\
      Init-Res &65.1 & 6 & 50.7 &15 &70.4 & 10 & \underline{77.1}&\underline{8}&60.6 &8 & 76.8 & 6 \\
      Dense & 66.2 & 4 & 51.5 & 2 &74.1 &8 & 68.5 &10 & 52.7 & 2 &75.1 &10\\
      JK & 75.5 & 30 &60.4&8&\underline{76.9} & \underline{6}
      &77.0& 10&60.3&4&76.8 &6\\
      DropMessage & 75.5 & 10 & \underline{61.0}& \underline{6}& 74.6 & 6 & 76.5 & 6 & \underline{61.1}& \underline{8}& 76.6 & 6 \\
      DeProp & 71.4 & 6 &	59.4 & 2 &	76.1 & 4 &	68.04 & 2 & 48.3 & 2	& 75.8& 4 \\
      Half-Hop & 73.7 & 8 &	59.48 & 6	& 76.5 & 6	& 76.0 & 20 &	59.6 & 4 &	\underline{76.9}& \underline{6}\\
      PSNR &\textbf{77.3} & \textbf{20}&\textbf{61.1} & \textbf{15}&\textbf{77.0} &\textbf{30}&\textbf{77.9} & \textbf{8}&\textbf{61.9} & \textbf{15}&\textbf{77.3} &\textbf{10}\\
      \hline
\hline
  \end{tabular}
  }
  \vspace{-0.3cm}
\end{table}
\subsection{Missing feature setting (RQ3)}
\label{exp:missing}
When do we need the deeper GNN? Real-world data often contain missing features. In that scenario, previous research \cite{Zhao_2019} has shown that deep GNNs can help improve performance. 
For the nodes with missing features, due to the lack of information, they need a deeper network to gather more neighborhood information, thereby achieving better node representations. However, deep GNNs face the issue of performance degradation. In this section, we examine if PSNR module can improve the performance of GNNs in the context of missing feature.

Consistent with \cite{Zhao_2019, DBLP:conf/kdd/LiuHJLL23, DBLP:conf/kdd/JinL0AT22}, we evaluate the performance of GNNs on three datasets, Cora, Citeseer, and Pubmed, and remove their node features from validation and test sets. Under this setting, the test nodes need more propagation layers to reach the training nodes. We reuse the metrics that already reported in \cite{DBLP:conf/kdd/JinL0AT22} for None, BatchNorm \cite{DBLP:conf/icml/IoffeS15}, PairNorm \cite{Zhao_2019}, DGN \cite{Zhou_2020}, DeCorr \cite{DBLP:conf/kdd/JinL0AT22}, and DropEdge \cite{Rong_2019}. For all residual-based models, the results are obtained by varying the number of layers in $\{2, 4, 6, 8, 10, 15, 20, 30\}$ and running five times for each number of layers. We select the layer \#K that achieves the best performance and report its average accuracy. The results are reported in Table~\ref{ssnc-mv}. By examining the results in Table~\ref{ssnc-mv}, under the  missing feature setting, the optimal number of layers to achieve the best performance is significantly higher than in the fully observed feature setting, demonstrating the importance of deep GNNs. And PSNR outperform other baselines in all cases
through alleviating over-smoothing more effectively. Specially, on the Pubmed dataset, PSNR boost the accuracy of GCN and GAT by 40.6\% and 38.8\%, respectively.

\subsection{Performance on large graphs (RQ4)}
\label{ls}
To validate the scalability of PSNR, we conducted additional experiments on three larger graph datasets i.e.,
Coauthor-Physics, Flickr and Ogbn-arxiv, to further validate the effectiveness and scalability of our method. Specifically, we selected the GCN backbone for our experiment. 
We report the performance of GCN and various residual methods on three datasets and the memory consumption on the largest dataset, Ogbn-arxiv. The experimental results are presented in Table \ref{large}, from which we observe that PSNR-GCN scales well and achieves the best results across all three large datasets. Meanwhile, in terms of memory consumption, PSNR is slightly higher than GCN and ResGCN, comparable to GCNII with the same initial residuals, and significantly lower than JKNet and DenseGCN. Regarding training time, PSNR is roughly at the same level as JKNet, and its time is shorter than that of DenseNet.

\begin{table}[h]
\vspace{-0.3cm}
\caption{Comparison of different methods across various datasets and memory consumption (MB) and training time (ms / epoch) on Ogbn-arxiv. The best performance for each dataset is in bold, while the second best is underlined.}
\centering
\label{large}
\resizebox{0.8\linewidth}{!}{
\begin{tabular}{lccc|cl}
\hline
\hline
Method & Phy & Flickr & Ogbn-arxiv & Memory  &Time \\
\midrule
GCN        & 95.32 $\pm$ 0.11 & 51.40 $\pm$ 0.33 & 64.37 $\pm$ 0.41 &2421 & 30.10\\
ResGCN        & 95.61 $\pm$ 0.18 & 51.90 $\pm$ 0.16 & \underline{66.32 $\pm$ 0.59} & 2463  & 36.06\\
GCNII    & \underline{95.90 $\pm$ 0.14} & 46.18 $\pm$ 0.21 & 61.43 $\pm$ 1.63 & 2525 & 33.33\\
JKNet      & 95.88 $\pm$ 0.15 & 51.65 $\pm$ 0.31 & 60.46 $\pm$ 1.21 & 2921  & 40.05\\
DenseGCN      & 95.50 $\pm$ 0.12 & \underline{52.18 $\pm$ 0.25} & 62.46 $\pm$ 1.58 & 3131  & 52.24\\
PSNR-GCN  & \textbf{95.92 $\pm$ 0.17} & \textbf{52.47 $\pm$ 0.16} & \textbf{67.81 $\pm$ 0.57}  & 2539 & 42.93\\
\hline
\hline
\end{tabular}}
\label{tab:comparison}
\vspace{-0.3cm}
\end{table}

\vspace{-0.3cm}
\section{Conclusion and Future Work}
\label{sec: conclusion}
In this paper, we addressed the oversmoothing in Graph Neural Networks (GNNs) with a focus on residual methods. We revisit the oversmoothing from the perspective of overlapping neighborhood subgraphs, explaining why residual methods can alleviate it. Our analysis revealed that current residual methods often lack node adaptivity and struggle with information loss in high-order neighborhoods subgraphs. To overcome these limitations, we introduce the Posteriori-Sampling-based Node-Adaptive Residual Module (PSNR). This innovative module uses a graph encoder to learn the posterior distribution of residual coefficients for each node at different layers, enabling fine-grained, node-adaptive neighborhood subgraph aggregation with minimal overhead. Extensive experiments confirmed that the PSNR module can effectively mitigate oversmoothing and improve performance, particularly in scenarios requiring deep networks. Despite the significant progress made by PSNR, training a deeper GNN remains challenging, prompting the need for further research in the future. 


\section{Acknowledgements}
This work was supported by National Science and Technology Major Project (No. 2022ZD0115101), National Natural Science Foundation of China Project (No. 623B2086, No. U21A20427), Project (No. WU2022A009) from the Center of Synthetic Biology and Integrated Bioengineering of Westlake University and Integrated Bioengineering of Westlake University and Project (No. WU2023C019) from the Westlake University Industries of the Future Research Funding. Carl Yang was not supported by any funds from China.


\appendix

\newpage
\section{Fully Observed Node Classification}
\label{app_a}
\vspace{1.5cm}
\begin{table}[ht]
\centering
\caption{Node classification accuracy (\%) on GCN backbone. The best results across different layers are highlighted.}
\begin{adjustbox}{max width=\textwidth}
\begin{tabular}{lllllllllll}
\hline
\hline
\textbf{Datasets} & \textbf{Model} & \textbf{Layer 2} & \textbf{Layer 4} & \textbf{Layer 8} & \textbf{Layer 16} & \textbf{Layer 32} & \textbf{Layer 64} \\ 
\midrule
\multirow{7}{*}{Cora} 
& GCN & \cellcolor[gray]{0.8}78.56$\pm$1.57 & 78.52$\pm$2.02 & 77.27$\pm$2.29 & 72.11$\pm$6.30 & 46.09$\pm$12.12 & 30.76$\pm$1.30 \\ 
& ResGCN & 79.39$\pm$1.12 & 79.80$\pm$1.53 & 80.10$\pm$1.46 & \cellcolor[gray]{0.8}80.11$\pm$0.98 & 71.27$\pm$3.65 & 36.48$\pm$7.53 \\ 
& GCNII & \cellcolor[gray]{0.8}77.40$\pm$1.43 & 76.47$\pm$1.96 & 76.39$\pm$1.88 & 76.15$\pm$1.70 & 75.76$\pm$1.83 & 76.27$\pm$1.46 \\ 
& DenseGCN & \cellcolor[gray]{0.8}74.94$\pm$2.46 & 67.06$\pm$2.02 & 67.58$\pm$2.66 & 65.08$\pm$3.71 & 59.98$\pm$2.30 & 34.37$\pm$5.69 \\ \
& JKNet & 78.44$\pm$2.10 & \cellcolor[gray]{0.8}79.42$\pm$1.48 & 79.36$\pm$2.23 & 77.18$\pm$2.17 & 77.33$\pm$2.88 & 51.58$\pm$7.29 \\ 
& DropMessage-GCN & \cellcolor[gray]{0.8}80.00$\pm$2.22 & 79.88$\pm$1.81 & 78.14$\pm$1.65 & 76.40$\pm$1.22 & 52.18$\pm$1.06 & 30.34$\pm$0.63 \\ 
& DeProp-GCN & \cellcolor[gray]{0.8}79.52$\pm$2.63 & 78.44$\pm$2.39 & 33.02$\pm$5.64 & 31.28$\pm$3.24 & 30.20$\pm$0.01 & 30.20$\pm$0.01 \\ 
& Half-Hop-GCN & 78.92$\pm$2.08 & \cellcolor[gray]{0.8}80.74$\pm$1.80 & 80.24$\pm$1.61 & 46.66$\pm$13.37 & 43.68$\pm$7.14 & 31.03$\pm$7.39 \\ 
& PSNR-GCN & 80.59$\pm$1.57 & \cellcolor[gray]{0.8}81.01$\pm$1.63 & 80.55$\pm$1.57 & 78.26$\pm$1.36 & 76.34$\pm$2.18 & 77.75$\pm$2.27 \\ 
\midrule
\multirow{7}{*}{Citeseer} 
& GCN & \cellcolor[gray]{0.8}66.00$\pm$2.37 & 64.13$\pm$2.31 & 64.13$\pm$2.10 & 58.52$\pm$3.31 & 27.21$\pm$3.98 & 27.52$\pm$5.04 \\ 
& Res-GCN & 66.11$\pm$1.65 & \cellcolor[gray]{0.8}66.40$\pm$2.29 & 65.97$\pm$1.93 & 65.46$\pm$2.02 & 48.70$\pm$3.77 & 33.74$\pm$4.79 \\ 
& GCNII & \cellcolor[gray]{0.8}66.69$\pm$1.95 & 66.18$\pm$1.74 & 66.50$\pm$1.77 & 66.31$\pm$2.08 & 66.27$\pm$1.92 & 66.50$\pm$2.59 \\ 
& Dense-GCN & \cellcolor[gray]{0.8}62.54$\pm$2.56 & 58.99$\pm$3.74 & 54.35$\pm$5.42 & 50.10$\pm$5.24 & 49.09$\pm$4.03 & 31.43$\pm$5.54 \\ 
& JKNet & \cellcolor[gray]{0.8}65.49$\pm$2.32 & 64.40$\pm$2.28 & 64.18$\pm$3.07 & 63.77$\pm$1.87 & 60.88$\pm$3.53 & 29.66$\pm$7.51 \\ 
& DropMessage-GCN &\cellcolor[gray]{0.8}67.66$\pm$1.84 & 67.04$\pm$2.20 & 63.84$\pm$2.66 & 59.82$\pm$2.75 & 22.68$\pm$3.77 & 21.38$\pm$1.72 \\ 
& Half-Hop-GCN & \cellcolor[gray]{0.8}67.76$\pm$2.19 & 67.46$\pm$1.99 & 67.02$\pm$2.53 & 54.48$\pm$3.93 & 38.40$\pm$8.33 & 24.14$\pm$3.48 \\ 
& DeProp-GCN & \cellcolor[gray]{0.8}67.24$\pm$2.02 & 64.86$\pm$2.93 & 29.08$\pm$10.03 & 21.72$\pm$0.91 & 21.94$\pm$1.18 & 20.98$\pm$1.13 \\ 
& PSNR-GCN & \cellcolor[gray]{0.8}68.06$\pm$2.12 & 66.03$\pm$1.93 & 65.46$\pm$1.59 & 65.81$\pm$2.39 & 65.52$\pm$1.76 & 65.85$\pm$2.30 \\  
\midrule
\multirow{7}{*}{CS} 
& GCN & \cellcolor[gray]{0.8}90.19$\pm$0.83 & 88.81$\pm$0.62 & 86.93$\pm$0.83 & 84.47$\pm$0.95 & 71.40$\pm$4.93 & 36.74$\pm$4.57 \\ 
& Res-GCN & \cellcolor[gray]{0.8}90.86$\pm$0.99 & 90.63$\pm$0.96 & 89.97$\pm$0.90 & 88.40$\pm$0.90 & 85.01$\pm$1.53 & 64.39$\pm$3.86 \\ 
& GCNII & \cellcolor[gray]{0.8}90.34$\pm$0.78 & 90.02$\pm$0.81 & 90.07$\pm$0.77 & 90.09$\pm$0.94 & 90.17$\pm$0.52 & 89.93$\pm$0.70 \\ 
& Dense-GCN & 89.01$\pm$1.09 & 89.99$\pm$1.27 & \cellcolor[gray]{0.8}90.28$\pm$1.06 & 89.33$\pm$1.73 & 88.92$\pm$1.08 & 88.40$\pm$1.37 \\ 
& JKNet & 90.81$\pm$1.35 & \cellcolor[gray]{0.8}90.94$\pm$0.89 & 90.53$\pm$1.41 & 89.52$\pm$1.11 & 88.81$\pm$1.14 & 88.44$\pm$1.44 \\ 
& DropMessage-GCN & \cellcolor[gray]{0.8}90.64$\pm$2.34 & 89.18$\pm$1.76 & 89.38$\pm$1.52 & 85.62$\pm$7.01 & 87.94$\pm$6.18 & 84.44$\pm$1.13 \\ 
& Half-Hop-GCN & \cellcolor[gray]{0.8}90.30$\pm$1.83 & 89.30$\pm$2.23 & 88.98$\pm$1.81 & 82.98$\pm$1.88 & 54.60$\pm$4.55 & 25.36$\pm$10.27 \\ 
& DeProp-GCN & \cellcolor[gray]{0.8}90.98$\pm$2.04 & 89.10$\pm$2.23 & 71.24$\pm$4.18 & 72.34$\pm$2.26 & 52.60$\pm$0.03 & 22.60$\pm$0.01 \\ 
& PSNR-GCN & \cellcolor[gray]{0.8}91.23$\pm$1.00 & 90.70$\pm$1.49 & 90.26$\pm$1.17 & 90.26$\pm$0.98 & 90.52$\pm$1.02 & 90.30$\pm$0.88 \\  
\midrule
\multirow{7}{*}{Photo} 
& GCN & \cellcolor[gray]{0.8}91.22$\pm$0.73 & 90.49$\pm$0.76 & 88.10$\pm$1.02 & 80.05$\pm$4.25 & 54.80$\pm$8.00 & 46.57$\pm$9.64 \\ 
& Res-GCN & \cellcolor[gray]{0.8}91.31$\pm$0.85 & 90.97$\pm$0.78 & 90.71$\pm$0.78 & 85.98$\pm$2.36 & 65.42$\pm$6.75 & 56.15$\pm$10.43 \\ 
& GCNII & 91.02$\pm$0.93 & 90.98$\pm$0.92 & 91.02$\pm$0.70 & 90.99$\pm$0.76 & 91.20$\pm$0.81 & \cellcolor[gray]{0.8}91.46$\pm$0.84 \\ 
& Dense-GCN & \cellcolor[gray]{0.8}90.38$\pm$1.02 & 90.07$\pm$1.76 & 89.34$\pm$1.40 & 87.77$\pm$2.00 & 86.01$\pm$1.91 & 68.22$\pm$17.58 \\ 
& JKNet & 87.26$\pm$1.77 & 87.96$\pm$1.91 & 84.39$\pm$2.76 & \cellcolor[gray]{0.8}90.85$\pm$1.19 & 90.10$\pm$1.20 & 87.93$\pm$2.66 \\ 
& DropMessage-GCN & \cellcolor[gray]{0.8}91.40$\pm$0.90 & 90.22$\pm$1.19 & 87.57$\pm$2.74 & 87.82$\pm$1.21 & 86.12$\pm$1.05 & 80.40$\pm$1.07 \\ 
& Half-Hop-GCN & 52.00$\pm$17.28 & 67.48$\pm$24.16 & \cellcolor[gray]{0.8}89.32$\pm$1.17 & 83.66$\pm$3.02 & 64.40$\pm$4.30 & 39.00$\pm$9.42 \\ 
& DeProp-GCN & \cellcolor[gray]{0.8}91.38$\pm$1.33 & 89.50$\pm$5.31 & 78.12$\pm$3.66 & 81.86$\pm$9.36 & 87.74$\pm$4.19 & 84.43$\pm$2.31 \\ 
& PSNR-GCN & 91.44$\pm$0.82 & 91.20$\pm$1.03 & 91.39$\pm$0.76 & 91.11$\pm$0.68 & \cellcolor[gray]{0.8}91.51$\pm$0.69 & 91.49$\pm$0.88 \\ 
\midrule
\multirow{7}{*}{Chameleon} 
& GCN & \cellcolor[gray]{0.8}67.70$\pm$0.17 & 56.30$\pm$2.28 & 48.37$\pm$1.56 & 46.81$\pm$2.06 & 39.62$\pm$2.13 & 33.27$\pm$1.86 \\ 
& Res-GCN & 70.28$\pm$1.14 & \cellcolor[gray]{0.8}70.70$\pm$1.54 & 69.18$\pm$0.78 & 51.64$\pm$2.11 & 42.61$\pm$3.60 & 31.42$\pm$2.73 \\ 
& GCNII & 69.45$\pm$1.63 & 69.36$\pm$1.13 & 69.62$\pm$1.61 & 69.23$\pm$1.21 & 68.87$\pm$2.16 & \cellcolor[gray]{0.8}69.69$\pm$1.36 \\ 
& Dense-GCN & \cellcolor[gray]{0.8}67.82$\pm$1.27 & 64.59$\pm$1.29 & 65.36$\pm$1.50 & 64.39$\pm$1.37 & 30.63$\pm$3.98 & 28.72$\pm$4.66 \\ 
& JKNet & \cellcolor[gray]{0.8}68.26$\pm$1.15 & 67.97$\pm$1.59 & 66.61$\pm$1.76 & 67.78$\pm$1.60 & 66.46$\pm$2.25 & 52.98$\pm$2.79 \\ 
& DropMessage-GCN & \cellcolor[gray]{0.8}65.91$\pm$2.43 & 60.50$\pm$2.84 & 47.29$\pm$2.00 & 42.66$\pm$1.60 & 34.56$\pm$1.61 & 31.74$\pm$1.56 \\ 
& Half-Hop-GCN & \cellcolor[gray]{0.8}60.07$\pm$2.34 & 55.72$\pm$3.12 & 52.64$\pm$1.99 & 43.19$\pm$2.55 & 36.26$\pm$2.12 & 29.34$\pm$2.20 \\ 
& DeProp-GCN & \cellcolor[gray]{0.8}61.51$\pm$1.40 & 58.25$\pm$2.22 & 46.46$\pm$2.18 & 31.88$\pm$2.52 & 30.81$\pm$2.07 & 33.36$\pm$0.96 \\ 
& PSNR-GCN & \cellcolor[gray]{0.8}71.51$\pm$1.90 & 70.74$\pm$2.24 & 71.40$\pm$1.83 & 66.00$\pm$1.97 & 65.82$\pm$2.02 & 69.67$\pm$4.59 \\ 
\midrule
\multirow{7}{*}{Squirrel} 
& GCN & \cellcolor[gray]{0.8}47.43$\pm$1.31 & 41.68$\pm$1.80 & 37.43$\pm$1.35 & 33.17$\pm$1.22 & 31.99$\pm$1.14 & 29.83$\pm$1.98 \\ 
& Res-GCN & 51.72$\pm$1.63 & \cellcolor[gray]{0.8}52.43$\pm$1.72 & 50.38$\pm$1.90 & 38.35$\pm$1.56 & 26.30$\pm$2.06 & 22.66$\pm$1.01 \\ 
& GCNII & \cellcolor[gray]{0.8}46.13$\pm$1.59 & 45.28$\pm$1.59 & 45.71$\pm$2.08 & 45.37$\pm$1.72 & 45.06$\pm$1.50 & 45.32$\pm$1.85 \\ 
& Dense-GCN & \cellcolor[gray]{0.8}49.57$\pm$1.58 & 49.01$\pm$0.98 & 49.51$\pm$1.30 & 49.08$\pm$1.36 & 49.29$\pm$1.25 & 49.04$\pm$1.30 \\ 
& JKNet & \cellcolor[gray]{0.8}49.87$\pm$1.44 & 49.11$\pm$2.27 & 46.62$\pm$1.44 & 24.79$\pm$3.88 & 45.73$\pm$1.80 & 41.75$\pm$1.96 \\ 
& DropMessage-GCN & \cellcolor[gray]{0.8}45.71$\pm$1.76 & 38.68$\pm$2.02 & 26.67$\pm$3.06 & 25.71$\pm$1.15 & 23.77$\pm$1.18 & 20.06$\pm$0.23 \\ 
& Half-Hop-GCN & \cellcolor[gray]{0.8}42.78$\pm$1.01 & 41.87$\pm$1.36 & 37.84$\pm$1.68 & 31.60$\pm$2.12 & 26.36$\pm$1.66 & 21.07$\pm$1.53 \\ 
& DeProp-GCN & \cellcolor[gray]{0.8}44.07$\pm$1.07 & 40.98$\pm$1.29 & 33.38$\pm$1.70 & 28.33$\pm$0.96 & 28.42$\pm$0.54 & 29.98$\pm$2.78 \\ 
& PSNR-GCN & \cellcolor[gray]{0.8}54.95$\pm$1.73 & 54.13$\pm$1.41 & 50.68$\pm$1.45 & 50.22$\pm$6.32 & 50.06$\pm$0.97 & 50.24$\pm$1.58 \\  
\hline
\hline
\end{tabular}
\end{adjustbox}
\label{tab:hyperparameters}
\end{table}

\newpage
\newpage

\vspace{4cm}

\begin{table}[ht]
\centering
\caption{Node classification accuracy (\%) on GAT backbone. The best results across different layers are highlighted.}
\begin{adjustbox}{max width=\textwidth}
\begin{tabular}{lllllllllll}
\hline
\hline
\textbf{Datasets} & \textbf{Model} & \textbf{Layer 2} & \textbf{Layer 4} & \textbf{Layer 8} & \textbf{Layer 16} & \textbf{Layer 32} & \textbf{Layer 64} \\  
\midrule
\multirow{7}{*}{Cora} 
& GAT & \cellcolor[gray]{0.8}79.21$\pm$1.31 & 78.83$\pm$1.78 & 41.48$\pm$2.84 & 30.20$\pm$0.00 & 30.20$\pm$0.00 & 30.20$\pm$0.00 \\ 
& Res-GAT & 79.42$\pm$1.46 & \cellcolor[gray]{0.8}79.72$\pm$1.94 & 78.88$\pm$1.69 & 78.71$\pm$2.13 & 77.80$\pm$2.25 & 30.22$\pm$0.12 \\
& InitRes-GAT & 79.32$\pm$1.65 & 79.39$\pm$1.45 & 79.00$\pm$1.97 & 79.16$\pm$1.68 & \cellcolor[gray]{0.8}79.67$\pm$1.71 & 79.42$\pm$1.97 \\
& Dense-GAT & \cellcolor[gray]{0.8}73.39$\pm$1.52 & 69.07$\pm$2.76 & 58.85$\pm$3.02  &60.34$\pm$2.12 & 59.44$\pm$3.03 & 55.80$\pm$3.12\\
& JK-GAT & 79.32$\pm$1.06 & \cellcolor[gray]{0.8}80.04$\pm$1.61 & 77.28$\pm$1.62 & 77.58$\pm$1.51 & 77.53$\pm$1.29 & 77.58$\pm$2.91 \\
& DropMessage-GAT & 80.06$\pm$2.50 & \cellcolor[gray]{0.8}80.36$\pm$1.92 & 79.20$\pm$0.22 & 76.54$\pm$2.49 & 52.54$\pm$1.02 & 30.17$\pm$0.06 \\
& DeProp-GAT & \cellcolor[gray]{0.8}76.00$\pm$2.08 & 69.06$\pm$3.94 & 30.22$\pm$0.06 & 31.20$\pm$0.02 & 30.20$\pm$0.01 & 30.20$\pm$0.01 \\
& Half-Hop-GAT & 77.20$\pm$1.86 & 76.98$\pm$1.42 & \cellcolor[gray]{0.8}77.24$\pm$1.69 & 75.84$\pm$1.68 & 70.50$\pm$4.69 & 30.20$\pm$0.10 \\
& PSNR-GAT & \cellcolor[gray]{0.8}80.47$\pm$1.62 & 80.22$\pm$0.98 & 79.96$\pm$1.69 & 80.02$\pm$1.64 & 79.43$\pm$1.78 & 79.69$\pm$1.29 \\ 
\midrule
\multirow{7}{*}{Citeseer} 
& GAT & \cellcolor[gray]{0.8}67.12$\pm$1.59 & 65.14$\pm$3.02 & 21.90$\pm$2.40 & 21.10$\pm$0.00 & 21.10$\pm$0.00 & 21.10$\pm$0.00 \\
& Res-GAT & \cellcolor[gray]{0.8}67.19$\pm$1.91 & 65.65$\pm$2.21 & 63.26$\pm$2.31 & 63.47$\pm$1.18 & 62.82$\pm$2.92 & 22.66$\pm$1.31 \\
& InitRes-GAT & \cellcolor[gray]{0.8}66.84$\pm$2.52 & 64.15$\pm$2.44 & 65.80$\pm$1.89 & 64.89$\pm$2.65 & 64.69$\pm$2.49 & 65.11$\pm$2.65 \\
& Dense-GAT & \cellcolor[gray]{0.8}61.63$\pm$2.53 & 54.84$\pm$2.34 & 50.69$\pm$3.79 & 49.21$\pm$3.48 & 47.24$\pm$3.08 & 46.34$\pm$3.62 \\
& JK-GAT & \cellcolor[gray]{0.8}65.83$\pm$2.21 & 64.79$\pm$2.46 & 63.44$\pm$2.51 & 65.00$\pm$1.43 & 63.57$\pm$2.27 & 62.08$\pm$2.26 \\
& DropMessage-GAT & \cellcolor[gray]{0.8}67.82$\pm$2.04 & 66.64$\pm$2.51 & 65.10$\pm$3.32 & 58.20$\pm$2.55 & 21.90$\pm$1.64 & 20.90$\pm$2.04 \\
& DeProp-GAT & \cellcolor[gray]{0.8}61.16$\pm$3.50 & 51.52$\pm$11.25 & 23.20$\pm$2.77 & 21.12$\pm$1.76 & 21.00$\pm$1.67 & 20.50$\pm$0.92 \\
& Half-Hop-GAT & \cellcolor[gray]{0.8}66.74$\pm$1.98 & 66.70$\pm$2.58 & 63.12$\pm$3.49 & 60.00$\pm$3.30 & 45.98$\pm$4.63 & 24.32$\pm$4.46 \\
& PSNR-GAT & \cellcolor[gray]{0.8}68.01$\pm$2.14 & 65.61$\pm$2.05 & 66.46$\pm$2.14 & 65.48$\pm$2.59 & 66.36$\pm$2.25 & 65.50$\pm$2.35 \\ 
\midrule
\multirow{7}{*}{CS} 
& GAT & \cellcolor[gray]{0.8}89.43$\pm$1.62 & 77.23$\pm$10.14 & 72.63$\pm$5.30 & 38.81$\pm$6.52 & 22.60$\pm$0.00 & 22.60$\pm$0.00 \\
& Res-GAT & 89.48$\pm$1.25 & 83.48$\pm$3.39 & \cellcolor[gray]{0.8}89.92$\pm$0.86 & 64.30$\pm$19.93 & 23.10$\pm$1.45 & 22.46$\pm$2.22 \\
& InitRes-GAT & \cellcolor[gray]{0.8}89.61$\pm$1.02 & 77.82$\pm$5.90 & 89.43$\pm$1.23 & 88.50$\pm$1.17 & 88.20$\pm$1.22 & 86.90$\pm$2.30 \\
& Dense-GAT & \cellcolor[gray]{0.8}89.72$\pm$0.86 & 88.81$\pm$0.89 &88.28$\pm$0.72 ~ &87.01$\pm$1.52 ~ & 86.47$\pm$1.13 ~ & 87.23$\pm$1.78~ \\
& JK-GAT & 89.77$\pm$1.02 & \cellcolor[gray]{0.8}90.10$\pm$0.94 & 90.03$\pm$1.26 & 89.68$\pm$1.41 & 89.66$\pm$1.04 & 89.86$\pm$1.12 \\
& DropMessage-GAT & \cellcolor[gray]{0.8}90.67$\pm$1.68 & 89.20$\pm$1.58 & 87.58$\pm$1.46 & 85.68$\pm$1.29 & 84.58$\pm$1.40 & 81.43$\pm$1.13 \\
& DeProp-GAT & \cellcolor[gray]{0.8}87.34$\pm$1.42 & 84.18$\pm$1.99 & 72.26$\pm$2.77 & 72.36$\pm$2.34 & 52.60$\pm$0.02 & 22.60$\pm$0.01 \\
& Half-Hop-GAT & \cellcolor[gray]{0.8}89.66$\pm$1.45 & 89.30$\pm$1.84 & 84.12$\pm$2.27 & 84.75$\pm$2.96 & 56.20$\pm$5.96 & 29.61$\pm$9.70 \\
& PSNR-GAT & \cellcolor[gray]{0.8}90.38$\pm$1.12 & 86.31$\pm$2.68 & 84.48$\pm$1.24 & 83.84$\pm$1.38 & 81.85$\pm$3.55 & 79.85$\pm$5.24 \\
\midrule
\multirow{7}{*}{Photo} 
& GAT & \cellcolor[gray]{0.8}89.64$\pm$0.84 & 42.4$\pm$18.86 & 51.13$\pm$10.00 & 27.97$\pm$7.71 & 25.40$\pm$0.00 & 25.40$\pm$0.00 \\
& Res-GAT & 89.03$\pm$1.33 & 89.30$\pm$4.68 & \cellcolor[gray]{0.8}91.40$\pm$0.74 & 88.60$\pm$1.79 & 25.61$\pm$4.23 & 26.16$\pm$1.88 \\
& InitRes-GAT & 88.92$\pm$2.22 & 31.74$\pm$3.12 & 91.23$\pm$1.22 & 90.72$\pm$1.40 & \cellcolor[gray]{0.8}91.53$\pm$1.06 & 91.11$\pm$1.21 \\
& Dense-GAT & \cellcolor[gray]{0.8}88.92$\pm$1.66 & 88.10$\pm$1.50 &87.03$\pm$2.22 ~ &86.79$\pm$1.97 ~ &86.24$\pm$2.65 ~ & 85.63$\pm$2.15 \\
& JK-GAT & 87.55$\pm$1.39 & \cellcolor[gray]{0.8}90.82$\pm$1.24 & 90.35$\pm$1.61 & 90.30$\pm$1.37 & 90.29$\pm$1.49 & 90.07$\pm$1.48 \\
& DropMessage-GAT & \cellcolor[gray]{0.8}90.98$\pm$0.90 & 90.29$\pm$0.97 & 82.15$\pm$2.82 & 84.96$\pm$1.56 & 85.10$\pm$0.90 & 82.15$\pm$1.32 \\
& DeProp-GAT & \cellcolor[gray]{0.8}89.76$\pm$1.52 & 87.98$\pm$1.05 & 82.54$\pm$1.01 & 81.25$\pm$2.88 & 82.26$\pm$2.17 & 80.00$\pm$2.11 \\
& Half-Hop-GAT & \cellcolor[gray]{0.8}89.92$\pm$0.77 & 85.48$\pm$1.89 & 85.16$\pm$2.77 & 72.12$\pm$1.54 & 67.84$\pm$1.08 & 45.51$\pm$0.65 \\
& PSNR-GAT & 90.93$\pm$1.42 & 91.48$\pm$0.94 & 91.33$\pm$1.05 & 91.05$\pm$0.95 & \cellcolor[gray]{0.8}91.64$\pm$0.61 & 91.37$\pm$0.96 \\
\midrule
\multirow{7}{*}{Chameleon} 
& GAT & \cellcolor[gray]{0.8}68.04$\pm$1.36 & 48.37$\pm$3.71 & 26.86$\pm$6.17 & 22.86$\pm$0.00 & 22.86$\pm$0.00 & 22.86$\pm$0.00 \\
& Res-GAT & 70.90$\pm$1.34 & 67.87$\pm$3.43 & \cellcolor[gray]{0.8}72.66$\pm$0.94 & 66.68$\pm$1.13 & 27.54$\pm$3.63 & 27.27$\pm$3.44 \\
& InitRes-GAT & 69.87$\pm$1.65 & 61.63$\pm$5.02 & \cellcolor[gray]{0.8}69.89$\pm$1.81 & 69.34$\pm$1.34 & 68.62$\pm$1.32 & 68.53$\pm$1.23 \\
& Dense-GAT &\cellcolor[gray]{0.8}67.36$\pm$1.95  & 66.87$\pm$2.07 & 66.21$\pm$2.03 & 61.95$\pm$2.14 & 62.32$\pm$2.10 & 61.71$\pm$1.69  \\
& JK-GAT & \cellcolor[gray]{0.8}67.98$\pm$1.71 & 65.71$\pm$1.40 & 66.75$\pm$1.88 & 66.99$\pm$2.23 & 66.24$\pm$1.42 & 66.31$\pm$1.56 \\
& DropMessage-GAT & \cellcolor[gray]{0.8}63.23$\pm$2.23 & 56.98$\pm$2.54 & 48.53$\pm$2.95 & 47.00$\pm$1.93 & 40.19$\pm$1.82 & 33.75$\pm$2.23 \\
& DeProp-GAT & \cellcolor[gray]{0.8}64.23$\pm$3.22 & 53.31$\pm$4.19 & 42.29$\pm$2.77 & 41.19$\pm$2.17 & 32.29$\pm$2.77 & 32.26$\pm$0.85 \\
& Half-Hop-GAT & \cellcolor[gray]{0.8}62.86$\pm$2.04 & 60.34$\pm$2.01 & 50.58$\pm$7.19 & 43.05$\pm$1.74 & 33.75$\pm$2.37 & 32.27$\pm$1.42 \\
& PSNR-GAT & 71.29$\pm$1.64 & 72.04$\pm$1.82 & 71.58$\pm$1.99 & 71.78$\pm$1.51 & 71.47$\pm$2.54 & \cellcolor[gray]{0.8}72.24$\pm$1.69 \\
\midrule
\multirow{7}{*}{Squirrel} 
& GAT & \cellcolor[gray]{0.8}47.93$\pm$1.99 & 32.96$\pm$1.85 & 20.57$\pm$1.02 & 20.00$\pm$0.00 & 20.00$\pm$0.00 & 20.00$\pm$0.00 \\
& Res-GAT & 52.87$\pm$1.68 & 49.87$\pm$4.74 & \cellcolor[gray]{0.8}55.98$\pm$2.12 & 50.31$\pm$1.68 & 23.07$\pm$1.98 & 22.01$\pm$1.26 \\
& InitRes-GAT & \cellcolor[gray]{0.8}51.29$\pm$1.42 & 43.92$\pm$5.77 & 49.83$\pm$1.55 & 50.34$\pm$1.17 & 50.30$\pm$1.81 & 50.46$\pm$1.75 \\
& Dense-GAT & \cellcolor[gray]{0.8}50.25$\pm$0.88 & 49.57$\pm$1.80 & 46.71$\pm$1.69 & 47.83$\pm$1.70 & 47.54$\pm$1.58 & 46.87$\pm$1.65\\
& JK-GAT & \cellcolor[gray]{0.8}50.43$\pm$1.45 & 44.41$\pm$2.33 & 49.08$\pm$0.79 & 49.31$\pm$1.99 & 48.87$\pm$1.74 & 49.56$\pm$1.31 \\
& DropMessage-GAT & \cellcolor[gray]{0.8}45.23$\pm$1.35 & 40.31$\pm$1.25 & 30.69$\pm$1.84 & 29.44$\pm$1.34 & 28.35$\pm$2.50 & 25.47$\pm$1.33 \\
& DeProp-GAT & 45.76$\pm$1.32& \cellcolor[gray]{0.8}46.29$\pm$3.24 & 29.99$\pm$0.28 & 28.97$\pm$0.56 & 28.91$\pm$0.23 & 20.00$\pm$0.04 \\
& Half-Hop-GAT & 43.46$\pm$1.57 & \cellcolor[gray]{0.8}47.84$\pm$4.36 & 42.53$\pm$1.53 & 29.19$\pm$1.50 & 24.37$\pm$1.54 & 23.79$\pm$0.65 \\
& PSNR-GAT & 57.81$\pm$2.08 & \cellcolor[gray]{0.8}60.85$\pm$1.61 & 59.58$\pm$2.09 & 60.43$\pm$2.20 & 60.00$\pm$2.20 & 60.20$\pm$1.53 \\  
\hline
\hline
\end{tabular}
\end{adjustbox}
\label{tab:hyperparameters}
\end{table}

\newpage
\section{Residual Connection Defination} 
\label{app_b}
Common residual connection for GNNs and their corresponding GNNs are described below.

\textbf{Res.} Res is composed of multiple residual blocks containing few stacked layers. Taking the initial input of the $n$-th residual block as $\mathbf{X}_n$, and the stacked nonlinear layers within the residual block as $\mathbf{F(X)}$:
\begin{displaymath}
    \mathbf{X}_{n+1}=\mathbf{F}(\mathbf{X}_n)+\mathbf{X}_n,
\end{displaymath}
where residual mapping and identity mapping refer to $\mathbf{F}(\mathbf{X})$ and $\mathbf{X}$ on the right side of the above equation, respectively. Inspired by Res, Guohao Li \& Matthias Müller(2019) proposed a residual connection learning framework for GCN and called this model ResGCN which can be simply described as follows:
\begin{displaymath}
\mathbf{H}_k =  \sigma\left(\tilde{\mathbf{D}}^{-\frac{1}{2}} \tilde{\mathbf{A}} \tilde{\mathbf{D}}^{-\frac{1}{2}} \mathbf{H}_{k-1} \mathbf{W}_{k-1}\right) + \mathbf{H}_{k-1}.   
\end{displaymath}

\textbf{InitialRes.} InitialRes is proposed for the first time in APPNP, unlike Res that carries information from the previous layers, it constructs a connection to the initial representation $\mathbf{X}_0$ at each layer:
\begin{displaymath}
\mathbf{X}_{n+1}=(1-\alpha)\mathbf{H}(\mathbf{X}_n)+\alpha{\mathbf{X}_0},
\end{displaymath}
where $\mathbf{H}(\mathbf{X})$ denotes the aggregation operation within one layer. InitialRes ensures that each node's representation retains at least an $\alpha$-sized portion of the initial feature information. Correspondingly, APPNP can be formulated as: 
\begin{displaymath}
    \mathbf{H}_k = \left(1-\alpha\right)\tilde{\mathbf{D}}^{-\frac{1}{2}} \tilde{\mathbf{A}} \tilde{\mathbf{D}}^{-\frac{1}{2}} \mathbf{H}_{k-1} + \alpha\mathbf{H}.
\end{displaymath}
Based on APPNP, GCNII introduces identity mapping from Res to make up for the deficiency in APPNP.

\textbf{Dense.} Dense proposes a more efficient way to reuse features between layers. The input is the outputs of all previous layers of the network and at each layer Dense concats them together:
\begin{displaymath}
\mathbf{X}_{n+1}=\mathbf{H}([\mathbf{X}_0,\mathbf{X}_1, \ldots,\mathbf{X}_n]),
\end{displaymath}
where $[\cdot]$ denotes the concatenation of the feature map for the output of layers 0 to n. Inspired by Dense, DenseGCN applies a similar idea to GCN, i.e., let the output of the $k$-th layer contains transformations from all previous GCN layers to exploit the information from different GCN layers:
\begin{displaymath}
\mathbf{H}_k=\mathbf{AGG}_{dense}(\mathbf{H},\mathbf{H}_{1}, \ldots,\mathbf{H}_{k-1}). 
\end{displaymath}

\textbf{JK.} At the last layer, JK sifts from all previous representations $[\mathbf{X}_{1}, \ldots,\mathbf{X}_{N}]$ and combines them:
\begin{displaymath}
\mathbf{X}_{output}=\mathbf{AGG}(\mathbf{X}_{1}, \ldots,\mathbf{X}_{N}).    
\end{displaymath}

The $\mathcal{AGG}$ operation includes concatenation, Maxpooling and LSTM-attention. When it is introduced to GNN, i.e., JKNet, can be formulated as: 
\begin{displaymath}
\mathbf{H}_{output}=\mathbf{AGG}_{jk}(\mathbf{H}_1, \ldots,\mathbf{H}_{k-1}).  
\end{displaymath}

\section{SMV for Node Groups of Different Degrees}
\label{app_c}
\begin{figure}[H]
        \centering
        \includegraphics[width=\linewidth]{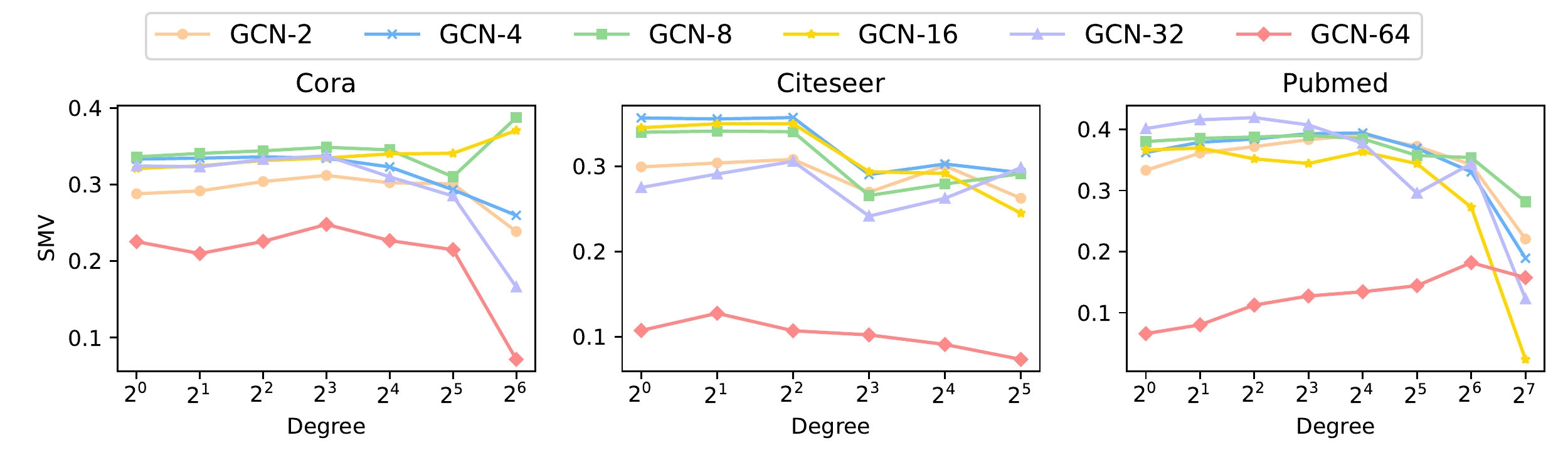}
        \caption*{Result of GCN.}
        \includegraphics[width=\linewidth]{Pict/gat_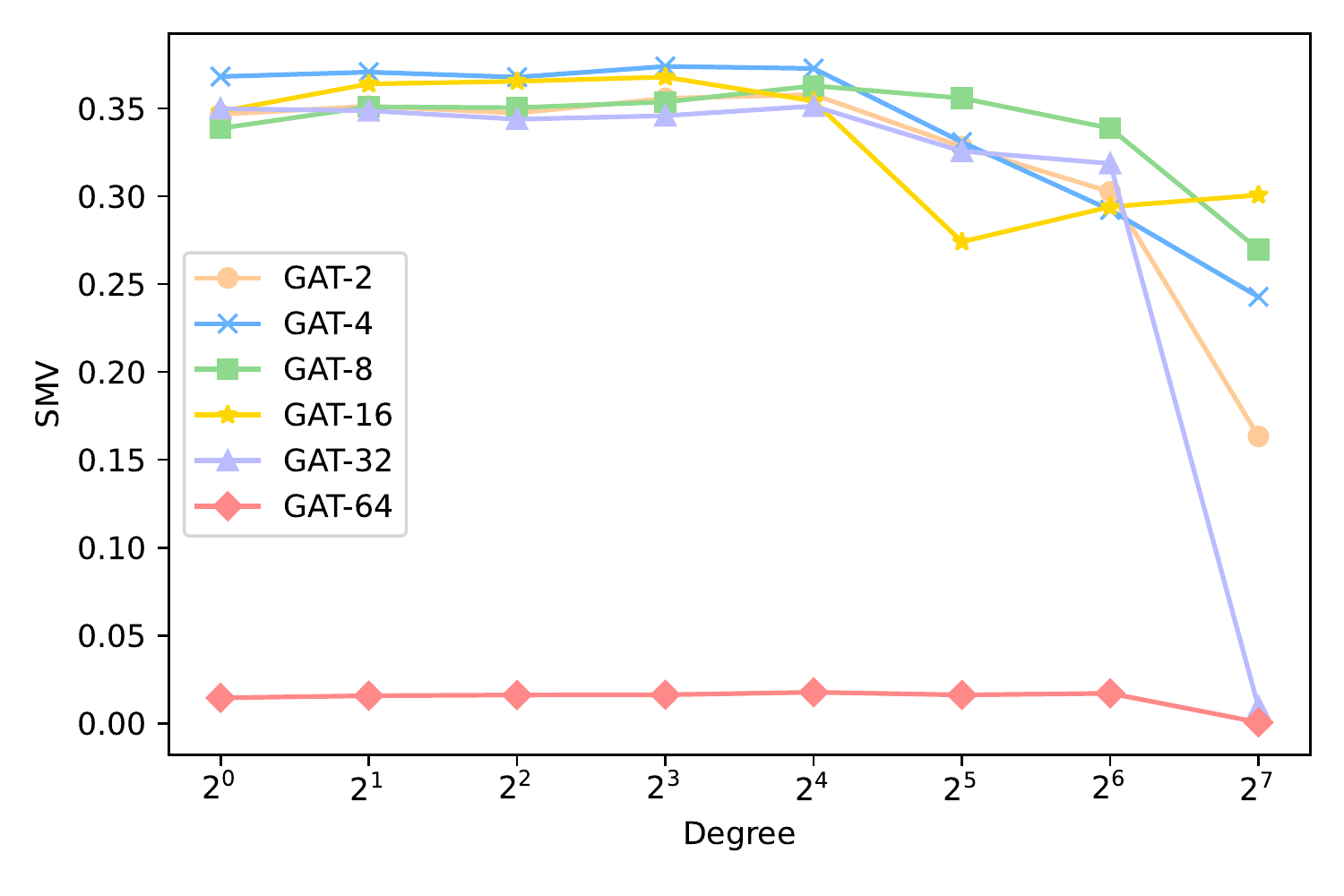}
    \caption*{Result of GAT.}
\end{figure}

\section{Derivation of the closed-form formulas in the table}
\label{app_d}
\textit{ResGCN}:
We can write the recursive formula for ResGCN in the following form:
\begin{equation}
    \mathbf{H}_k = (\mathbf{I}+\mathbf{N})\mathbf{H}_{k-1}.
\end{equation}
In turn, the following form can be obtained by recursion:
\begin{equation}
    \mathbf{H}_k = (\mathbf{I}+\mathbf{N})^k\mathbf{H}.
\end{equation}
Using the binomial theorem, we can obtain the closed-form formula for ResGCN as follows:
\begin{equation}
\mathbf{H}_k = \sum^k_{j=0} \mathbf{C}_k^j\mathbf{N}^j\mathbf{H}.
\end{equation}
\textit{APPNP}:
    According to the recurrence formula of APPNP:
    \begin{align}
            \mathbf{H}_k = \alpha\mathbf{H} + (1-\alpha)\mathbf{N}\mathbf{H}_{k-1}.
    \end{align}
    To obtain the closed-form formula, we can add a term $\mathbf{T}$ to both sides of the equation:
    \begin{align}
        \mathbf{H}_k + \mathbf{T} = \left(1-\alpha\right)\mathbf{N}\mathbf{H}_{k-1} + \alpha\mathbf{H} + \mathbf{T}.
    \end{align}
    We aim to transform the equation into the following form:
    \begin{equation}
        \mathbf{H}_k + \mathbf{T} = 
        \left(
        1-\alpha
        \right)\mathbf{N} \left( \mathbf{H}_{k-1} + \mathbf{T}\right).
        \label{appnp-ideal}
    \end{equation}
    Then we need to make sure that there exists a very $\mathbf{T}$ that satisfies the following equation:
    \begin{equation}
        \left(
        1-\alpha
        \right)
        \mathbf{NT} = \alpha\mathbf{H}+ \mathbf{T},
    \end{equation}
    which can be transformed into the following form:
    \begin{equation}
        \left(
        \left(1-\alpha\right)\mathbf{N} - \mathbf{I}
        \right) \mathbf{T} = \alpha \mathbf{H}.
    \end{equation}
    We can proof the following lemma:
    \begin{lemma}
        Given that $\alpha \in \left(0,1\right)$,
        $\left(1-\alpha\right)\mathbf{N} - \mathbf{I}$ is invertible.
    \end{lemma}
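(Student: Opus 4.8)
The plan is to control the spectrum of $\mathbf{N}=\tilde{\mathbf{D}}^{-1/2}\tilde{\mathbf{A}}\tilde{\mathbf{D}}^{-1/2}$ and then invoke a standard Neumann-series argument. First I would observe that $\mathbf{N}$ is a real symmetric matrix, so it is orthogonally diagonalizable with real eigenvalues and its operator $2$-norm equals its spectral radius. The key step is the bound $\rho(\mathbf{N})\le 1$: I would get this from the identity $\mathbf{N}=\mathbf{I}-\tilde{\mathbf{L}}$, where $\tilde{\mathbf{L}}=\mathbf{I}-\tilde{\mathbf{D}}^{-1/2}\tilde{\mathbf{A}}\tilde{\mathbf{D}}^{-1/2}$ is the symmetric normalized Laplacian of the self-looped graph, whose eigenvalues are known to lie in $[0,2)$ (the upper bound is strict precisely because every vertex has a self-loop, so the graph has no bipartite component). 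Hence the eigenvalues of $\mathbf{N}$ lie in $(-1,1]$, and in particular $\|\mathbf{N}\|_2=\rho(\mathbf{N})\le 1$. An equivalent route, if one prefers to avoid Laplacian facts, is to note that $\mathbf{N}$ is similar via $\tilde{\mathbf{D}}^{1/2}$ to the row-stochastic matrix $\tilde{\mathbf{D}}^{-1}\tilde{\mathbf{A}}$, whose spectral radius is at most $1$.

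With this bound in hand, for $\alpha\in(0,1)$ we have $1-\alpha\in(0,1)$, so
\begin{equation*}
\|(1-\alpha)\mathbf{N}\|_2=(1-\alpha)\|\mathbf{N}\|_2\le 1-\alpha<1 .
\end{equation*}
Therefore $\mathbf{I}-(1-\alpha)\mathbf{N}$ is invertible, either by the Neumann series $\sum_{m\ge 0}(1-\alpha)^m\mathbf{N}^m$, or simply because no eigenvalue $\mu\in(-1,1]$ of $\mathbf{N}$ can satisfy $(1-\alpha)\mu=1$, since that would force $\mu=1/(1-\alpha)>1$. Consequently $(1-\alpha)\mathbf{N}-\mathbf{I}=-\bigl(\mathbf{I}-(1-\alpha)\mathbf{N}\bigr)$ is invertible, which is the claim; this also exhibits the unique $\mathbf{T}=\bigl((1-\alpha)\mathbf{N}-\mathbf{I}\bigr)^{-1}\alpha\mathbf{H}$ needed to close the APPNP derivation.

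I expect the only real subtlety to be justifying the strict inequality $\rho(\mathbf{N})\le 1$ (equivalently, that $2$ is not an eigenvalue of $\tilde{\mathbf{L}}$) versus the weaker nonstrict statement; but for the Lemma the nonstrict bound $\|\mathbf{N}\|_2\le 1$ already suffices once it is combined with $\alpha>0$, so even that subtlety can be sidestepped. Everything else is routine linear algebra.
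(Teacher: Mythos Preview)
Your proposal is correct. Both your argument and the paper's rest on the same spectral fact, namely that every eigenvalue of $\mathbf{N}$ is at most $1$, so that $(1-\alpha)\mu-1\le -\alpha<0$ for each eigenvalue $\mu$; the difference is purely in packaging. The paper derives the bound from scratch via a Rayleigh-quotient computation, expanding $\mathbf{X}^T\tilde{\mathbf{D}}^{-1/2}\mathbf{L}\tilde{\mathbf{D}}^{-1/2}\mathbf{X}$ as a sum of edge-squared terms to get $\mathbf{X}^T\mathbf{N}\mathbf{X}/\mathbf{X}^T\mathbf{X}<1$ and hence that the Rayleigh quotient of $(1-\alpha)\mathbf{N}-\mathbf{I}$ is strictly negative. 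You instead cite the standard Laplacian eigenvalue range $[0,2)$ (or the similarity to the row-stochastic $\tilde{\mathbf{D}}^{-1}\tilde{\mathbf{A}}$) and then close with an operator-norm/Neumann-series step. Your version is shorter and modular, and your observation that the nonstrict bound $\|\mathbf{N}\|_2\le 1$ already suffices once $\alpha>0$ neatly sidesteps the bipartiteness/self-loop discussion; the paper's version is more self-contained but otherwise equivalent in content.
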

    \begin{proof}
Proving that $ \left(1-\alpha\right)\mathbf{N} - \mathbf{I}$ is invertible is equivalent to demonstrating that it does not possess an eigenvalue of 0.
Consider the Rayleigh quotient of $\left(1-\alpha\right)\mathbf{N} - \mathbf{I}$:
    \begin{equation}
        \frac {
        \mathbf{X}^T \left(
        \left(1-\alpha\right)\mathbf{N} - \mathbf{I}
        \right)\mathbf{X}} {\mathbf{X}^T \mathbf{X}}
        = 
        \left(
        1-\alpha
        \right)\frac {
        \mathbf{X}^T 
        \left(
        \tilde{\mathbf{D}}^{-\frac{1}{2}}
        \tilde{\mathbf{A}}
        \tilde{\mathbf{D}}^{-\frac{1}{2}}
        \right)
        \mathbf{X}} {\mathbf{X}^T \mathbf{X}}-1.
        \label{appnplemma1}
    \end{equation}
    From spectral graph theory, we can know the following equation holds:
    \begin{equation}
        \mathbf{X}^T 
        \left(
        \tilde{\mathbf{D}}^{-\frac{1}{2}}
        \mathbf{L}
        \tilde{\mathbf{D}}^{-\frac{1}{2}}
        \right)
        \mathbf{X}
        =
        \sum_{(v_i,v_j) \in \mathcal{E}} \left(
        \frac
        {
        \mathbf{X}_i
        }
        {
        \sqrt{d_i+1}
        }
        -
        \frac
        {
        \mathbf{X}_j
        }
        {
        \sqrt{d_j+1}
        }
        \right)^2 > 0.
    \end{equation}
    We can decompose $\mathbf{L}$ into $\tilde{\mathbf{D}}-\tilde{\mathbf{A}}$, then we have:
    \begin{equation}
        \frac {
        \mathbf{X}^T 
        \left(
        \tilde{\mathbf{D}}^{-\frac{1}{2}}
        \tilde{\mathbf{D}}
        \tilde{\mathbf{D}}^{-\frac{1}{2}}
        \right)
        \mathbf{X}} {\mathbf{X}^T \mathbf{X}}
        -
        \frac {
        \mathbf{X}^T 
        \left(
        \tilde{\mathbf{D}}^{-\frac{1}{2}}
        \tilde{\mathbf{A}}
        \tilde{\mathbf{D}}^{-\frac{1}{2}}
        \right)
        \mathbf{X}} {\mathbf{X}^T \mathbf{X}} > 0,
    \end{equation}
    which is equivalent to:
    \begin{equation}
                \frac {
        \mathbf{X}^T 
        \left(
        \tilde{\mathbf{D}}^{-\frac{1}{2}}
        \tilde{\mathbf{A}}
        \tilde{\mathbf{D}}^{-\frac{1}{2}}
        \right)
        \mathbf{X}} {\mathbf{X}^T \mathbf{X}}
        <
        \frac {
        \mathbf{X}^T 
        \mathbf{I}
        \mathbf{X}} {\mathbf{X}^T \mathbf{X}}
        =
        1.
        \label{appnplemma2}
    \end{equation}
    Combining Eq.~\ref{appnplemma1} and Ineq.~\ref{appnplemma2}, we can obtain:
    \begin{equation}
        \frac {
        \mathbf{X}^T \left(
        \left(1-\alpha\right)\mathbf{N} - \mathbf{I}
        \right)\mathbf{X}} {\mathbf{X}^T \mathbf{X}}
        <\left(1-\alpha\right) - 1 = -\alpha < 0.
    \end{equation}
    Hence, 0 can't be the eigenvalue of $ \left(1-\alpha\right)\mathbf{N} - \mathbf{I}$ . Therefore, $ \left(1-\alpha\right)\mathbf{N} - \mathbf{I}$ is invertible.  \end{proof}
    Since \textbf{Lemma 1} holds, we can derive the concrete form of $\mathbf{T}$: 
    \begin{equation}
        \mathbf{T} = \alpha\left( 
        \left(1-\alpha\right)\mathbf{N} - \mathbf{I}
        \right)^{-1}\mathbf{H}.
    \end{equation}

    Thus we can keep recurring from Eq.~\ref{appnp-ideal} and obtain the following equation:
    \begin{equation}
                \mathbf{H}_k + \mathbf{T} = 
                \left(
                \left(
        1-\alpha
        \right)\mathbf{N}
        \right)^k
        \left(
        \mathbf{H} + \mathbf{T}
        \right),
        \label{ideal}
    \end{equation}
    which also can be written as:
    \begin{equation}
        \mathbf{H}_k =  
                \left(
                \left(
        1-\alpha
        \right)\mathbf{N}
        \right)^k\mathbf{H} +
                        \left(
                \left(
        1-\alpha
        \right)\mathbf{N}
        \right)^k\mathbf{T} -\mathbf{T}.
        \label{ageneral1}
    \end{equation}
    For the second and third terms in Eq.~\ref{ageneral1}, we write $\left(1-\alpha
        \right)\mathbf{N}$ as $\left(1-\alpha
        \right)\mathbf{N} - \mathbf{I} +\mathbf{I}$. We can use the binomial theorem to write $\left(
                \left(
        1- \alpha
        \right)\mathbf{N}
        \right)^k$ as $\sum\limits^k_{j=0} \left(
        \left(
        1-\alpha
        \right)\mathbf{N} - \mathbf{I}
        \right)^j,$
        then Eq.~\ref{ageneral1} can be written as :
        \begin{equation}
                    \mathbf{H}_k = 
                \left(
                \left(
        1-\alpha
        \right)\mathbf{N}
        \right)^k\mathbf{H}
        +
        \sum\limits^k_{j=1} 
        \left(
        \left(
        1-\alpha
        \right)
        \mathbf{N} - \mathbf{I}
        \right)^j \mathbf{T}.        
        \end{equation}
    Bring in the specific form of $\mathbf{T}$ and further derive the closed-form formula of APPNP:
    \begin{align}
        \mathbf{H}_k 
        &= 
                \left(
                \left(
        1-\alpha
        \right)\mathbf{N}
        \right)^k\mathbf{H}
        +
        \alpha\sum\limits^{k-1}_{j=0} 
        \left(
        \left(
        1-\alpha
        \right)
        \mathbf{N} - \mathbf{I}
        \right)^j \mathbf{H} \\
        & = 
        \left(
        1-\alpha
        \right)^k
        \mathbf{N}
        ^k\mathbf{H}
        +
        \alpha
        \sum\limits_{j=0}^{k-1}\sum\limits_{i=0}^{j}
        \left( -1\right)^{j-i}
        \left(1-\alpha\right)^i\mathbf{N}^i\mathbf{H}.
    \end{align}
\section{Derivation of the closed-form formula of PSNR-GCN}
\label{app_e}
For each diagonal element $\Lambda_{k,ii}$ of $\Lambda_k$, it is trivial to obtain: 
\begin{displaymath}
    0 < \Lambda_{k,ii} < 1 .
\end{displaymath}
To derive the closed-form formula of PSNR-GCN, we need proof the following lemma first.
\begin{lemma}
Set all the  diagonal elements of $\Lambda$ to satisfy
$0 < \Lambda_{ii} < 1$, then
$\left( 
\Lambda
\mathbf{N} + \mathbf{I}\right)$ is invertible.
\end{lemma}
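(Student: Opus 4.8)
The plan is to show that $\Lambda\mathbf{N}+\mathbf{I}$ has only the trivial kernel; since this is a real square matrix, that is equivalent to invertibility (a singular real matrix admits a nonzero real null vector). Because $0<\Lambda_{ii}<1$, the matrix $\Lambda$ is invertible and $(\Lambda^{-1})_{ii}=1/\Lambda_{ii}>1$, so the kernel equation $(\Lambda\mathbf{N}+\mathbf{I})\mathbf{X}=0$ may be rewritten as $\mathbf{N}\mathbf{X}=-\Lambda^{-1}\mathbf{X}$ for some real $\mathbf{X}\neq 0$. Left-multiplying by $\mathbf{X}^{T}$ gives $\mathbf{X}^{T}\mathbf{N}\mathbf{X}=-\mathbf{X}^{T}\Lambda^{-1}\mathbf{X}=-\sum_i \mathbf{X}_i^{2}/\Lambda_{ii}<-\|\mathbf{X}\|^{2}$, where the strict inequality uses $1/\Lambda_{ii}>1$ together with $\mathbf{X}\neq 0$. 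So it suffices to prove the complementary spectral bound $\mathbf{X}^{T}\mathbf{N}\mathbf{X}\geq -\|\mathbf{X}\|^{2}$ for all $\mathbf{X}$, which then contradicts the displayed inequality and finishes the argument.

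This complementary bound is the mirror image of the Rayleigh-quotient inequality already established inside the proof of Lemma 1 (there it was shown that the Rayleigh quotient of $\mathbf{N}$ lies strictly below $1$), and I would prove it in the same style, replacing the graph Laplacian by the signless Laplacian. Concretely, write $\mathbf{N}+\mathbf{I}=\tilde{\mathbf{D}}^{-1/2}(\tilde{\mathbf{A}}+\tilde{\mathbf{D}})\tilde{\mathbf{D}}^{-1/2}$ and note $\tilde{\mathbf{A}}+\tilde{\mathbf{D}}=(\mathbf{A}+\mathbf{D})+2\mathbf{I}$. For $\mathbf{Y}=\tilde{\mathbf{D}}^{-1/2}\mathbf{X}$ one has $\mathbf{Y}^{T}(\mathbf{A}+\mathbf{D})\mathbf{Y}=\sum_{(v_i,v_j)\in\mathcal{E}}(\mathbf{Y}_i+\mathbf{Y}_j)^{2}\geq 0$, hence $\mathbf{X}^{T}(\mathbf{N}+\mathbf{I})\mathbf{X}=\mathbf{Y}^{T}(\mathbf{A}+\mathbf{D})\mathbf{Y}+2\|\mathbf{Y}\|^{2}>0$ whenever $\mathbf{X}\neq 0$; in particular $\mathbf{X}^{T}\mathbf{N}\mathbf{X}>-\|\mathbf{X}\|^{2}$, which is even stronger than needed. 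Combining this with the inequality from the first paragraph produces $\mathbf{X}^{T}\mathbf{N}\mathbf{X}>-\|\mathbf{X}\|^{2}>\mathbf{X}^{T}\mathbf{N}\mathbf{X}$, a contradiction, so no such $\mathbf{X}$ exists and $\Lambda\mathbf{N}+\mathbf{I}$ is invertible.

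An equivalent and perhaps tidier packaging is to observe that $\Lambda^{1/2}$ exists and is invertible, so $\Lambda\mathbf{N}+\mathbf{I}$ is similar to the symmetric matrix $\Lambda^{1/2}\mathbf{N}\Lambda^{1/2}+\mathbf{I}$; then for $\mathbf{Y}=\Lambda^{1/2}\mathbf{X}$ one computes $\mathbf{X}^{T}(\Lambda^{1/2}\mathbf{N}\Lambda^{1/2}+\mathbf{I})\mathbf{X}=\mathbf{Y}^{T}\mathbf{N}\mathbf{Y}+\|\mathbf{X}\|^{2}\geq -\|\mathbf{Y}\|^{2}+\|\mathbf{X}\|^{2}=\sum_i(1-\Lambda_{ii})\mathbf{X}_i^{2}>0$, so the symmetric matrix is positive definite and hence invertible. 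I do not expect a genuine obstacle here; the only points requiring care are establishing $\lambda_{\min}(\mathbf{N})\geq -1$ cleanly (the self-loops in $\tilde{\mathbf{A}}$ are what make it hold, in fact strictly) and phrasing the kernel argument over $\mathbb{R}$ so that the non-symmetry of the product $\Lambda\mathbf{N}$ does not cause complications with complex eigenvalues.
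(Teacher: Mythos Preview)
Your proposal is correct and follows essentially the same route as the paper. The paper factors $|\Lambda\mathbf{N}+\mathbf{I}|=|\Lambda|\,|\mathbf{N}+\Lambda^{-1}|$ and then shows the Rayleigh quotient of the symmetric matrix $\mathbf{N}+\Lambda^{-1}$ is strictly positive by combining $\mathbf{X}^{T}\Lambda^{-1}\mathbf{X}>\|\mathbf{X}\|^{2}$ with the signless-Laplacian identity $\mathbf{Y}^{T}(\mathbf{A}+\mathbf{D})\mathbf{Y}=\sum_{(v_i,v_j)\in\mathcal{E}}(\mathbf{Y}_i+\mathbf{Y}_j)^{2}\geq 0$ to get $\mathbf{X}^{T}\mathbf{N}\mathbf{X}>-\|\mathbf{X}\|^{2}$; your kernel argument and your similarity-to-$\Lambda^{1/2}\mathbf{N}\Lambda^{1/2}+\mathbf{I}$ packaging are minor re-phrasings of exactly these two ingredients.
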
 
\begin{proof}
    Proving that $ \Lambda\mathbf{N} + \mathbf{I}$ is invertible is equivalent to demonstrating that its determinant is not equal to 0.
    Because all the  diagonal elements of $\Lambda$ satisfy
    $0 < \Lambda_{ii} < 1$, then $\Lambda$ is invertible. And due to
    \begin{equation}
        |\Lambda\mathbf{N} + \mathbf{I}| = |\Lambda||\mathbf{N} + \Lambda^{-1}|.
        \label{ap1-1}
    \end{equation}
Therefore, proving that its determinant is not equal to 0 is equivalent to demonstrating that $|\mathbf{N} + \Lambda^{-1}|$ is not equal to 0, and further equivalent to demonstrating that $\mathbf{N} + \Lambda^{-1}$ does not have an eigenvalue of 0.

Consider the Rayleigh quotient of $\mathbf{N} + \Lambda^{-1}$:
    \begin{equation}
        \mathbf{R}_1 = \frac {
        \mathbf{X}^T \left(
        \mathbf{N} + \Lambda^{-1} 
        \right)\mathbf{X}} {\mathbf{X}^T \mathbf{X}}
        \label{ap1-2}.
    \end{equation}
    Split Eq.~\ref{ap1-2}, and we can derive:
    \begin{equation}
        \mathbf{R}_1 = \frac {
        \mathbf{X}^T 
        \mathbf{N}\mathbf{X}
        } {\mathbf{X}^T \mathbf{X}}
        + \frac {
        \mathbf{X}^T 
        \Lambda^{-1} \mathbf{X}
        } {\mathbf{X}^T \mathbf{X}}.
        \label{ap1-3}
    \end{equation}
The second term of Eq.~\ref{ap1-3} can be easily written as follows:
\begin{displaymath}
    \frac {
        \mathbf{X}^T 
        \Lambda^{-1} \mathbf{X}
        } {\mathbf{X}^T \mathbf{X}} = \frac{\sum_{i=1}^{N} {\Lambda_{ii}}^{-1}x_i^2}
        {\sum_{i=1}^N x_i^2}
        \label{ap1-4}.
\end{displaymath}
Since $0<\Lambda_{ii}<1$, therefore  ${\Lambda_{ii}}^{-1} >1$, then
\begin{equation}
 \frac {
        \mathbf{X}^T 
        \Lambda^{-1} \mathbf{X}
        } {\mathbf{X}^T \mathbf{X}} > 1.
    \label{ap1-5}
\end{equation}
For the first item, we write its specific form as follows:
\begin{equation}
    \frac {
        \mathbf{X}^T 
        \mathbf{N}\mathbf{X}
        } {\mathbf{X}^T \mathbf{X}} = \frac
        {
        \mathbf{X}^T 
                \left(\tilde{\mathbf{D}}^{-\frac{1}{2}} \tilde{\mathbf{A}} \tilde{\mathbf{D}}^{-\frac{1}{2}}  \right)
         \mathbf{X}
        }
        {
        \mathbf{X}^T  \mathbf{X}
        }.
        \label{ap1-6}
\end{equation}
From spectral graph theory, we know that the following formula holds:
\begin{equation}
        \mathbf{X}^T 
        \left(\tilde{\mathbf{D}}^{-\frac{1}{2}} 
        \left(
        \mathbf{A} + \mathbf{D}
        \right)
        \tilde{\mathbf{D}}^{-\frac{1}{2}}  \right)
        \mathbf{X}
        =
        \sum_{(v_i,v_j) \in \mathcal{E}} \left(
        \frac
        {
        \mathbf{X}_i
        }
        {
        \sqrt{d_i+1}
        }
        +
        \frac
        {
        \mathbf{X}_j
        }
        {
        \sqrt{d_j+1}
        }
        \right)^2 > 0.
        \label{ap1-7}
\end{equation}
Further mathematically transforming this formula, we can get the following form:
\begin{flalign*}
     &  \frac {
        \mathbf{X}^T 
        \left(\tilde{\mathbf{D}}^{-\frac{1}{2}} 
        \left(
        \mathbf{A} + \mathbf{D}
        \right)
        \tilde{\mathbf{D}}^{-\frac{1}{2}}  \right)
        \mathbf{X}
        } {\mathbf{X}^T \mathbf{X}} &
\end{flalign*}
    \begin{align}
        &=
                \frac {
        \mathbf{X}^T 
        \left(\tilde{\mathbf{D}}^{-\frac{1}{2}} 
        \left(
        \tilde{\mathbf{A}} + \tilde{\mathbf{D}} - 2\mathbf{I}
        \right)
        \tilde{\mathbf{D}}^{-\frac{1}{2}}  \right)
        \mathbf{X}
        } {\mathbf{X}^T \mathbf{X}} \\
        &=
        \frac
        {
        \mathbf{X}^T 
                \left(\tilde{\mathbf{D}}^{-\frac{1}{2}} \tilde{\mathbf{A}} \tilde{\mathbf{D}}^{-\frac{1}{2}}  \right)
         \mathbf{X}
        }
        {
        \mathbf{X}^T  \mathbf{X}
        }
        +\frac
        {
        \mathbf{X}^T 
                \left(\tilde{\mathbf{D}}^{-\frac{1}{2}} \tilde{\mathbf{D}} \tilde{\mathbf{D}}^{-\frac{1}{2}}  \right)
         \mathbf{X}
        }
        {
        \mathbf{X}^T  \mathbf{X}
        }
        -\frac {
        2\mathbf{X}^T 
        \tilde{\mathbf{D}}^{-1}
        \mathbf{X}
        } {\mathbf{X}^T \mathbf{X}}       \\
        &= \frac
        {
        \mathbf{X}^T 
                \left(\tilde{\mathbf{D}}^{-\frac{1}{2}} \tilde{\mathbf{A}} \tilde{\mathbf{D}}^{-\frac{1}{2}}  \right)
         \mathbf{X}
        }
        {
        \mathbf{X}^T  \mathbf{X}
        }
        +1
        -\frac {
        2\mathbf{X}^T 
        \tilde{\mathbf{D}}^{-1}
        \mathbf{X}
        } {\mathbf{X}^T \mathbf{X}}  
         > 0.
    \end{align}
Further, we get the following result:
\begin{equation}
\frac
        {
        \mathbf{X}^T 
                \left(\tilde{\mathbf{D}}^{-\frac{1}{2}} \tilde{\mathbf{A}} \tilde{\mathbf{D}}^{-\frac{1}{2}}  \right)
         \mathbf{X}
        }
        {
        \mathbf{X}^T  \mathbf{X}
        }
        > \frac {
        2\mathbf{X}^T 
        \tilde{\mathbf{D}}^{-1}
        \mathbf{X}
        } {\mathbf{X}^T \mathbf{X}} -1.
        \label{ap1-8}
\end{equation}
It is trivial to obtain:
\begin{equation}
    \frac {
        2\mathbf{X}^T 
        \tilde{\mathbf{D}}^{-1}
        \mathbf{X}
        } {\mathbf{X}^T \mathbf{X}} = 
        \frac{2 \sum_{i=1}^{N} 
        \left( 
        \mathbf{d}_i+1
        \right)^{-1}
        \mathbf{x}_i^2}
        {\sum_{i=1}^N x_i^2} > 0 .      
        \label{ap1-9}
\end{equation}
Combining Eq.~\ref{ap1-3}, Ineq.~\ref{ap1-5}, Ineq.~\ref{ap1-8} and Ineq.~\ref{ap1-9}, we can get the following inequality:
\begin{equation}
    \frac
        {
        \mathbf{X}^T 
                \left(\tilde{\mathbf{D}}^{-\frac{1}{2}} \tilde{\mathbf{A}} \tilde{\mathbf{D}}^{-\frac{1}{2}} 
                + \Lambda^{-1}
                \right)
         \mathbf{X}
        }
        {
        \mathbf{X}^T  \mathbf{X}
        } > 0 .
\end{equation}
It can be obtained that the eigenvalue of $ \tilde{\mathbf{D}}^{-\frac{1}{2}} \tilde{\mathbf{A}} \tilde{\mathbf{D}}^{-\frac{1}{2}} + \Lambda^{-1} $ is greater than 0, so 0 is not an eigenvalue of it. Further, $ \Lambda\mathbf{N} + \mathbf{I}$ is invertible.
\end{proof}

Now, we derive the closed form of the formula.
Given the following recursive formula:
\begin{equation}
\mathbf{H}_{k}=
\mathbf{H}_{1} + 
\Lambda_{k-1}
\left(
\mathbf{H}_1 -
\tilde{\mathbf{D}}^{-1 / 2} \tilde{\mathbf{A}} \tilde{\mathbf{D}}^{-1 / 2}\mathbf{H}_{k-1} 
\right),
\label{ae1}
\end{equation}
where 
$\mathbf{H}_{1} = \tilde{\mathbf{D}}^{-1 / 2} \tilde{\mathbf{A}} \tilde{\mathbf{D}}^{-1 / 2}\mathbf{H} $ , $\mathbf{\Lambda}_{k} = diag\{\mathbf{\lambda}^{(1)}_{k},...\mathbf{\lambda}^{(n)}_{k}\}$,
$\mathbf{\lambda}_{k}^{(i)} \sim Sigmoid(\mathbf{\mathcal{N}}(\mathbf{\alpha}^{(i)}_{k},{\mathbf{\beta}^{(i)}_{k}}^{2}))$. After mathematical transformation, Eq.~\ref{ae1} can be written as:
\begin{equation}
\mathbf{H}_{k}=
\left(
\mathbf{I} + \Lambda_{k-1}
\right)
\mathbf{H}_{1} -  
\Lambda_{k-1}
\tilde{\mathbf{D}}^{-1 / 2} \tilde{\mathbf{A}} \tilde{\mathbf{D}}^{-1 / 2}\mathbf{H}_{k-1}.
\label{ae2}
\end{equation}
Set $\mathbf{N} = \tilde{\mathbf{D}}^{-1 / 2} \tilde{\mathbf{A}} \tilde{\mathbf{D}}^{-1 / 2}$, then Eq.~\ref{ae2} can be abbreviated as:
\begin{equation}
\mathbf{H}_{k}=
\left(
\mathbf{I} + \Lambda_{k-1}
\right)
\mathbf{H}_{1} -  
\Lambda_{k-1}
\mathbf{N}\mathbf{H}_{k-1}.
\label{ae3}
\end{equation}
We try to modify Eq.~\ref{ae3} to the form that is more suitable for obtaining the closed form of the formula:
\begin{equation}
\mathbf{H}_{k} + \mathbf{M}_{k-1}= -
\Lambda_{k-1}
\mathbf{N}
\left(
\mathbf{H}_{k-1} + \mathbf{M}_{k-1}
\right)
\label{ae4}.
\end{equation} 
To verify whether there exists such $\mathbf{M}$ that satisfies the equation, we need to solve the following equation:
\begin{equation}
 -  
\Lambda_{k-1}
\mathbf{N}\mathbf{M}_{k-1} =
\left(
\mathbf{I} + \Lambda_{k-1}
\right)
\mathbf{H}_{1} + \mathbf{M}_{k-1},
\label{ae5}
\end{equation}
which is equivalent to the following form:
\begin{equation}
 - \left( 
\Lambda_{k-1}
\mathbf{N} + \mathbf{I}\right)
\mathbf{M}_{k-1} =
\left(
\mathbf{I} + \Lambda_{k-1}
\right)
\mathbf{H}_{1}.
\label{ae6}
\end{equation}
Based on the definition, all the diagonal elements of $\Lambda_{k}$ satisfy
$0 < \lambda^{(i)}_{k} < 1$, so
according to \textbf{Lemma 2}, $\left( 
\Lambda_{k-1}
\mathbf{N} + \mathbf{I}\right)$ is invertible. Then $\mathbf{M}_{k-1} = -\left( 
\Lambda_{k-1}
\mathbf{N} + \mathbf{I}\right)^{-1}\left(
\mathbf{I} + \Lambda_{k-1}
\right)\mathbf{H}_{1},$
which means such $\mathbf{M}_{k-1}$ that we require exists.

First, we perform the following mathematical transformation on Eq.~\ref{ae4}:
\begin{equation} 
\mathbf{H}_k +\mathbf{M}_{k-1} = -\Lambda_{k-1}
\mathbf{N}
\left( 
\mathbf{H}_{k-1} + \mathbf{M}_{k-2} + \mathbf{M}_{k-1} - \mathbf{M}_{k-2}
\right),
\label{ae7}
\end{equation}
which can be split into the following form:
\begin{equation}
    \mathbf{H}_k +\mathbf{M}_{k-1} = -\Lambda_{k-1}
\mathbf{N}
\left( 
\mathbf{H}_{k-1} + \mathbf{M}_{k-2}\right) + \left(-\Lambda_{k-1}
\mathbf{N}\right)\left(\mathbf{M}_{k-1} - \mathbf{M}_{k-2}
\right).
\label{ae8}
\end{equation}
Let $\tilde{\mathbf{N}}_{k-1}$ denotes $-\Lambda_{k-1}
\mathbf{N}$, so the formula can be simply written as:
\begin{equation}
    \mathbf{H}_k +\mathbf{M}_{k-1} = \tilde{\mathbf{N}}_{k-1}
\left( 
\mathbf{H}_{k-1} + \mathbf{M}_{k-2}\right) + \tilde{\mathbf{N}}_{k-1}\left(\mathbf{M}_{k-1} - \mathbf{M}_{k-2}
\right).
\label{ae9}
\end{equation}
We first use Eq.~\ref{ae4} to recurse once, then derive the following formula:
\begin{equation}
    \mathbf{H}_k +\mathbf{M}_{k-1} = \tilde{\mathbf{N}}_{k-1}\tilde{\mathbf{N}}_{k-2}
\left( 
\mathbf{H}_{k-2} + \mathbf{M}_{k-3}\right) + \tilde{\mathbf{N}}_{k-1}\left(\mathbf{M}_{k-1} - \mathbf{M}_{k-2}
\right).
\label{ae10}
\end{equation}
By analogy, continuing to split and iterate, we can get the closed form of the output of the $k$-th layer:
\begin{equation}
     \mathbf{H}_k  = 
\sum_{i=2}^{k-1} \prod_{j=i}^{k-1}\tilde{\mathbf{N}}_j
\left(
\mathbf{M}_i - \mathbf{M}_{i-1}
\right)
+ \prod_{i=1}^{k-1}\tilde{\mathbf{N}}_i\left( \mathbf{H}_1 + \mathbf{M}_1\right)
-\mathbf{M}_{k-1}.
\label{ae11}
\end{equation}


\section{Comparing with other subgraph-based methods}
\label{app_f}
It is worth noting that a lot of previous work mentions "subgraph". However, our approach is fundamentally different from these approaches.

\textbf{Relation with other subgraph-based methods.} While there are existing works \cite{DBLP:conf/iclr/BevilacquaFLSCB22, DBLP:conf/nips/QianRGN022} related to subgraphs, they primarily focus on graph classification tasks, aiming to learn representations of entire graphs. Given the limited capacity of GNNs to effectively represent the entire graphs, these subgraph-based approaches employ various strategies to leverage information from multiple subgraph structures within the overall graph to improve the representation.

In contrast, our focus is on node-level tasks, specifically, enhancing the representations of individual nodes. In our context, we naturally refer to neighborhood subgraphs as different orders of ego-networks centered on a node (which can be regarded as subgraphs of the entire graph). Our work uncovers the relationship between these subgraphs and over-smoothing, as well as how to utilize them to enhance node representations. This distinction fundamentally sets our approach apart from other subgraph methods.


\section{Experiment Setup}
\label{app_g}
\subsection{Details of Datasets}
The dataset statistics are shown in Table~\ref{tab:datasettab}, and details on dataset splits are summarized as follows:

\textbf{Experiment } \ref{exp:oversmoothing} \& \ref{exp:full}. For Cora, Citeseer, Coauthor-CS, Amazon-Photo, we randomly select 20 nodes per class for training set, 500 nodes for validation and 1000 nodes for testing. For Chameleon and Squirrel, we randomly divide each
class’s nodes into 60\%, 20\%, and 20\% as the train, validation, and test
sets, respectively. 
 
\textbf{Experiment } \ref{exp:missing}.
We follow the widely used semi-supervised setting in \cite{DBLP:conf/iclr/KipfW17}.

\textbf{Experiment } \ref{ls}.
For larger datasets, We randomly divide each class’s nodes into 20\%, 20\%, and 60\% as the train, validation, and test sets, respectively.

\begin{table}[H]
  \caption{Dataset statistics of real-world datasets.}
  \label{tab:datasettab}
  \centering
    \resizebox{\linewidth}{!}{
  \begin{tabular}{ccccccccccccc}
    \toprule 
        & \textbf{Cora} & \textbf{Citeseer} &\textbf{Pubmed}& \textbf{Amazon-Photo} &\textbf{Ogbn-arxiv} &\textbf{Chameleon} &\textbf{Squirrel}& \textbf{Coauthor-CS} & \textbf{Coauthor-Phy} & \textbf{Flickr}\\ 
      \midrule 
      \text{\#Nodes} &2708 &3327&19717&7650&169343&2277&5201 &18333 &34493 &89250 \\
    \text{\#Edges}&5429&4732&119081&126842&1166243&36101&217073 &81894 & 247962 &899756 \\

        \text{\#Features} &1433&3703&500&745&128&2325&2089 & 6805 &8415 &500 \\
                \text{\#Classes} &7&6&3&8&39&5&5 & 5 & 10  & 7 \\
      \bottomrule 
  \end{tabular}
  }
\end{table}

\subsection{Parameter Settings} 
We summarized the hyperparameters used in different experiments in Table \ref{tab:hyper}.
\begin{table}[ht]
\caption{Hyperparameters for experiments}
\label{tab:hyper}
\centering
\resizebox{\textwidth}{!}{
\begin{tabular}{ccccccccc}
\hline
\hline
\textbf{Experiment} & \textbf{Backbone} & \textbf{Learning Rate} & \textbf{Dropout} & \textbf{Weight Decay} & \textbf{Hidden State} & \textbf{Attention Head} & \textbf{Max Epoch} & \textbf{Early Stopping Epoch} \\ 
\midrule
\multirow{2}{*}{Experiment \ref{exp:oversmoothing} \& \ref{exp:full}} 
& GCN & \{0.01, 0.001\} & 0.5 & 0.0005 & 128 & - & 500 & 100\\
& GAT & \{0.01, 0.001\} & 0.5 & 0.0005 & 64 & 3 & 500 & 100\\
\midrule
\multirow{2}{*}{Experiment \ref{exp:missing}} 
& GCN & \{0.01, 0.001\} & 0.5 & 0.0005 & 128 & -  & 1000 & 1000\\
& GAT & \{0.01, 0.001\} & 0.5 & 0.0005 & 32 & 1  & 1000 & 1000\\
\midrule
\multirow{1}{*}{Experiment \ref{ls}} 
& GCN & \{0.01, 0.001\} & 0.5 & 0.0005 & 128 & - & 500 & 100 \\
\hline
\hline
\end{tabular}}
\label{tab:hyperparameters}
\end{table}

\section*{H. The Analysis of PSNR Residual Coefficients}
\label{Appendix H}
We conducted an empirical study using an 8-layer PSNR-GCN trained on the Cora dataset to obtain the best-performing model. We saved the mean and standard deviation of the learned residual coefficient distribution for each layer. Nodes were evenly divided into four groups based on their degree, with each group containing a similar number of nodes, and the average mean and standard deviation for each group across different layers are reported in Table \ref{tab:summary}. The following observations can be obtained from the table: the mean residual coefficient increases with the number of layers, suggesting that PSNR effectively retains high-order subgraph information. In certain layers, the variance of the residual coefficients rises, indicating that added randomness helps mitigate information loss in higher-order subgraphs. In the shallow layers, the mean values show no significant differences across node degrees; however, in deeper layers, nodes with higher degrees tend to exhibit lower mean values, indicating that these nodes retain more initial information due to significant subgraph overlap. All of these observations align with our expectations, illustrating how residual coefficients adapt based on node degree and layer depth.

\begin{table}[ht]
    \centering
    \caption{Summary of means and standard deviations for different layers}
    \resizebox{\textwidth}{!}{
    \begin{tabular}{|c|c|c|c|c|c|c|c|c|}
        \hline
        \textbf{Layer} & \textbf{Mean Group 1} & \textbf{Mean Group 2} & \textbf{Mean Group 3} & \textbf{Mean Group 4} & \textbf{Std Group 1} & \textbf{Std Group 2} & \textbf{Std Group 3} & \textbf{Std Group 4} \\
        \hline
        Layer 0 & 0.0043 & 0.0049 & 0.0063 & 0.0099 & 0.0002 & 0.0001 & 0.0002 & 0.0001  \\
        \hline
        Layer 1 & 0.0009 & 0.0017 & 0.0013 & 0.0016 & 0.3102 & 0.3120 & 0.3130& 0.2410\\
        \hline
        Layer 2 & 0.0000 & 0.0000 & 0.0003 & 0.0008 & 0.0480 & 0.0431 & 0.0446& 0.0170\\
        \hline
        Layer 3 & 0.5582 & 0.6017 & 0.5972 & 0.4979 & 0.0084 & 0.0118 & 0.0105& 0.0080\\
        \hline
        Layer 4 & 1.4898 & 1.5238 & 1.5212 & 1.3478 & 8.3354e-05 & 5.4541e-05 & 0.0001& 0.0002\\
        \hline
        Layer 5 & 2.4080 & 2.3974 & 2.3981 & 2.1414 & 0.0062 & 0.0081 & 0.0078& 0.0059\\
        \hline
        Layer 6 & 5.2347 & 5.3148 & 5.2070 & 3.8177 & 0.0046 & 0.0089 & 0.0075& 0.0069\\
        \hline
        Layer 7 & 11.0787 & 11.1253 & 10.9247 & 8.7238 & 0.1642& 0.2089& 0.2128& 0.0677\\
        \hline
    \end{tabular}}
    \label{tab:summary}
\end{table}

\section*{I. Comparison of Experimental Results of Different GraphEncoders.}
\label{appi}
In practice, in addition to SAGE, encoders such as GAT and GCN can also be utilized. To provide further insight, we have included results for different encoders on the classical semi-supervised node classification task. The results are summarized in the table blow.

\begin{table}[h]
\centering
\caption{Different GraphEncoder performance for SSNC task (layer 2)}
\resizebox{\textwidth}{!}{
\begin{tabular}{lcccccc}
\toprule
Graph Encoder & Cora          & Citeseer      & CS            & Photo         & Chameleon     & Squirrel     \\
\midrule
GCN           & \textbf{80.98$\pm$1.60} & 68.46$\pm$2.28 & 90.52$\pm$0.82 & \textbf{91.56$\pm$0.74} & \textbf{72.02$\pm$1.60} & 56.14$\pm$1.51 \\
GAT           & 80.89$\pm$1.63 & \textbf{68.77$\pm$1.89} & 90.61$\pm$0.89 & 91.18$\pm$0.92 & 71.97$\pm$1.28 & \textbf{56.24$\pm$1.11} \\
SAGE          & 80.59$\pm$1.57 & 68.06$\pm$2.12 & \textbf{91.23$\pm$1.00} & 91.44$\pm$0.82 & 71.51$\pm$1.90 & 54.95$\pm$1.73 \\
\bottomrule
\end{tabular}}
\label{graphencoder1}
\end{table}

\begin{table}[h]
\centering
\caption{Different GraphEncoder performance for SSNC task (layer 4)}
\resizebox{\textwidth}{!}{
\begin{tabular}{lcccccc}
\toprule
Graph Encoder & Cora          & Citeseer      & CS            & Photo         & Chameleon     & Squirrel     \\
\midrule
GCN           & 81.65$\pm$1.70 & \textbf{68.11$\pm$1.24} & 90.66$\pm$0.70 & 91.14$\pm$0.90 & \textbf{71.58$\pm$2.07} & 56.34$\pm$1.48 \\
GAT           & \textbf{82.21$\pm$1.41} & 67.96$\pm$1.20 & 90.57$\pm$0.89 & 91.17$\pm$0.81 & 71.29$\pm$1.75 & \textbf{56.50$\pm$1.45} \\
SAGE          & 81.01$\pm$1.63 & 66.03$\pm$1.93 & \textbf{90.70$\pm$1.49} & \textbf{91.20$\pm$1.03} & 70.74$\pm$2.24 & 54.13$\pm$1.41 \\
\bottomrule
\end{tabular}}
\label{graphencoder2}
\end{table}

The results indicate that each encoder has its strengths and performs differently across various datasets, demonstrating superior performance compared to the baseline and emphasizing the potential of PSNR.

\newpage
\section*{NeurIPS Paper Checklist}

The checklist is designed to encourage best practices for responsible machine learning research, addressing issues of reproducibility, transparency, research ethics, and societal impact. Do not remove the checklist: {\bf The papers not including the checklist will be desk rejected.} The checklist should follow the references and precede the (optional) supplemental material.  The checklist does NOT count towards the page
limit. 

Please read the checklist guidelines carefully for information on how to answer these questions. For each question in the checklist:
\begin{itemize}
    \item You should answer \answerYes{}, \answerNo{}, or \answerNA{}.
    \item \answerNA{} means either that the question is Not Applicable for that particular paper or the relevant information is Not Available.
    \item Please provide a short (1–2 sentence) justification right after your answer (even for NA). 
\end{itemize}

{\bf The checklist answers are an integral part of your paper submission.} They are visible to the reviewers, area chairs, senior area chairs, and ethics reviewers. You will be asked to also include it (after eventual revisions) with the final version of your paper, and its final version will be published with the paper.

The reviewers of your paper will be asked to use the checklist as one of the factors in their evaluation. While "\answerYes{}" is generally preferable to "\answerNo{}", it is perfectly acceptable to answer "\answerNo{}" provided a proper justification is given (e.g., "error bars are not reported because it would be too computationally expensive" or "we were unable to find the license for the dataset we used"). In general, answering "\answerNo{}" or "\answerNA{}" is not grounds for rejection. While the questions are phrased in a binary way, we acknowledge that the true answer is often more nuanced, so please just use your best judgment and write a justification to elaborate. All supporting evidence can appear either in the main paper or the supplemental material, provided in appendix. If you answer \answerYes{} to a question, in the justification please point to the section(s) where related material for the question can be found.

IMPORTANT, please:
\begin{itemize}
    \item {\bf Delete this instruction block, but keep the section heading ``NeurIPS paper checklist"},
    \item  {\bf Keep the checklist subsection headings, questions/answers and guidelines below.}
    \item {\bf Do not modify the questions and only use the provided macros for your answers}.
\end{itemize}


\begin{enumerate}

\item {\bf Claims}
    \item[] Question: Do the main claims made in the abstract and introduction accurately reflect the paper's contributions and scope?
    \item[] Answer: \answerYes{} 
    \item[] Justification: Our paper’s contributions and scope are presented in the abstract and introduction accurately.
    \item[] Guidelines:
    \begin{itemize}
        \item The answer NA means that the abstract and introduction do not include the claims made in the paper.
        \item The abstract and/or introduction should clearly state the claims made, including the contributions made in the paper and important assumptions and limitations. A No or NA answer to this question will not be perceived well by the reviewers. 
        \item The claims made should match theoretical and experimental results, and reflect how much the results can be expected to generalize to other settings. 
        \item It is fine to include aspirational goals as motivation as long as it is clear that these goals are not attained by the paper. 
    \end{itemize}

\item {\bf Limitations}
    \item[] Question: Does the paper discuss the limitations of the work performed by the authors?
    \item[] Answer: \answerYes{} 
    \item[] Justification:  Limitations of our work are discussed in Section \ref{sec: conclusion}.
    \item[] Guidelines:
    \begin{itemize}
        \item The answer NA means that the paper has no limitation while the answer No means that the paper has limitations, but those are not discussed in the paper. 
        \item The authors are encouraged to create a separate "Limitations" section in their paper.
        \item The paper should point out any strong assumptions and how robust the results are to violations of these assumptions (e.g., independence assumptions, noiseless settings, model well-specification, asymptotic approximations only holding locally). The authors should reflect on how these assumptions might be violated in practice and what the implications would be.
        \item The authors should reflect on the scope of the claims made, e.g., if the approach was only tested on a few datasets or with a few runs. In general, empirical results often depend on implicit assumptions, which should be articulated.
        \item The authors should reflect on the factors that influence the performance of the approach. For example, a facial recognition algorithm may perform poorly when image resolution is low or images are taken in low lighting. Or a speech-to-text system might not be used reliably to provide closed captions for online lectures because it fails to handle technical jargon.
        \item The authors should discuss the computational efficiency of the proposed algorithms and how they scale with dataset size.
        \item If applicable, the authors should discuss possible limitations of their approach to address problems of privacy and fairness.
        \item While the authors might fear that complete honesty about limitations might be used by reviewers as grounds for rejection, a worse outcome might be that reviewers discover limitations that aren't acknowledged in the paper. The authors should use their best judgment and recognize that individual actions in favor of transparency play an important role in developing norms that preserve the integrity of the community. Reviewers will be specifically instructed to not penalize honesty concerning limitations.
    \end{itemize}

\item {\bf Theory Assumptions and Proofs}
    \item[] Question: For each theoretical result, does the paper provide the full set of assumptions and a complete (and correct) proof?
    \item[] Answer: \answerYes{} 
    \item[] Justification: We provide the proofs of all theorems and the derivations of all formulas in Section \ref{sec:theory}, Appendix \ref{app_d} and Appendix \ref{app_e}, respectively.
    \item[] Guidelines:
    \begin{itemize}
        \item The answer NA means that the paper does not include theoretical results. 
        \item All the theorems, formulas, and proofs in the paper should be numbered and cross-referenced.
        \item All assumptions should be clearly stated or referenced in the statement of any theorems.
        \item The proofs can either appear in the main paper or the supplemental material, but if they appear in the supplemental material, the authors are encouraged to provide a short proof sketch to provide intuition. 
        \item Inversely, any informal proof provided in the core of the paper should be complemented by formal proofs provided in appendix or supplemental material.
        \item Theorems and Lemmas that the proof relies upon should be properly referenced. 
    \end{itemize}

    \item {\bf Experimental Result Reproducibility}
    \item[] Question: Does the paper fully disclose all the information needed to reproduce the main experimental results of the paper to the extent that it affects the main claims and/or conclusions of the paper (regardless of whether the code and data are provided or not)?
    \item[] Answer: \answerYes{} 
    \item[] Justification: We provide the details of experiments in Appendix \ref{app_g}, which can ensure the reproducibility of our experimental result.
    \item[] Guidelines:
    \begin{itemize}
        \item The answer NA means that the paper does not include experiments.
        \item If the paper includes experiments, a No answer to this question will not be perceived well by the reviewers: Making the paper reproducible is important, regardless of whether the code and data are provided or not.
        \item If the contribution is a dataset and/or model, the authors should describe the steps taken to make their results reproducible or verifiable. 
        \item Depending on the contribution, reproducibility can be accomplished in various ways. For example, if the contribution is a novel architecture, describing the architecture fully might suffice, or if the contribution is a specific model and empirical evaluation, it may be necessary to either make it possible for others to replicate the model with the same dataset, or provide access to the model. In general. releasing code and data is often one good way to accomplish this, but reproducibility can also be provided via detailed instructions for how to replicate the results, access to a hosted model (e.g., in the case of a large language model), releasing of a model checkpoint, or other means that are appropriate to the research performed.
        \item While NeurIPS does not require releasing code, the conference does require all submissions to provide some reasonable avenue for reproducibility, which may depend on the nature of the contribution. For example
        \begin{enumerate}
            \item If the contribution is primarily a new algorithm, the paper should make it clear how to reproduce that algorithm.
            \item If the contribution is primarily a new model architecture, the paper should describe the architecture clearly and fully.
            \item If the contribution is a new model (e.g., a large language model), then there should either be a way to access this model for reproducing the results or a way to reproduce the model (e.g., with an open-source dataset or instructions for how to construct the dataset).
            \item We recognize that reproducibility may be tricky in some cases, in which case authors are welcome to describe the particular way they provide for reproducibility. In the case of closed-source models, it may be that access to the model is limited in some way (e.g., to registered users), but it should be possible for other researchers to have some path to reproducing or verifying the results.
        \end{enumerate}
    \end{itemize}

\item {\bf Open access to data and code}
    \item[] Question: Does the paper provide open access to the data and code, with sufficient instructions to faithfully reproduce the main experimental results, as described in supplemental material?
    \item[] Answer: \answerYes{} 
    \item[] Justification: We only used public datasets and the anonymous link to the code is provided in the Abstract.
    \item[] Guidelines:
    \begin{itemize}
        \item The answer NA means that paper does not include experiments requiring code.
        \item Please see the NeurIPS code and data submission guidelines (\url{https://nips.cc/public/guides/CodeSubmissionPolicy}) for more details.
        \item While we encourage the release of code and data, we understand that this might not be possible, so “No” is an acceptable answer. Papers cannot be rejected simply for not including code, unless this is central to the contribution (e.g., for a new open-source benchmark).
        \item The instructions should contain the exact command and environment needed to run to reproduce the results. See the NeurIPS code and data submission guidelines (\url{https://nips.cc/public/guides/CodeSubmissionPolicy}) for more details.
        \item The authors should provide instructions on data access and preparation, including how to access the raw data, preprocessed data, intermediate data, and generated data, etc.
        \item The authors should provide scripts to reproduce all experimental results for the new proposed method and baselines. If only a subset of experiments are reproducible, they should state which ones are omitted from the script and why.
        \item At submission time, to preserve anonymity, the authors should release anonymized versions (if applicable).
        \item Providing as much information as possible in supplemental material (appended to the paper) is recommended, but including URLs to data and code is permitted.
    \end{itemize}

\item {\bf Experimental Setting/Details}
    \item[] Question: Does the paper specify all the training and test details (e.g., data splits, hyperparameters, how they were chosen, type of optimizer, etc.) necessary to understand the results?
    \item[] Answer: \answerYes{} 
    \item[] Justification: We provide the details of training and test in Appendix \ref{app_g}.
    \item[] Guidelines:
    \begin{itemize}
        \item The answer NA means that the paper does not include experiments.
        \item The experimental setting should be presented in the core of the paper to a level of detail that is necessary to appreciate the results and make sense of them.
        \item The full details can be provided either with the code, in appendix, or as supplemental material.
    \end{itemize}

\item {\bf Experiment Statistical Significance}
    \item[] Question: Does the paper report error bars suitably and correctly defined or other appropriate information about the statistical significance of the experiments?
    \item[] Answer: \answerYes{} 
    \item[] Justification: The standard deviation of the experimental results is reported in most of the tables.
    \item[] Guidelines:
    \begin{itemize}
        \item The answer NA means that the paper does not include experiments.
        \item The authors should answer "Yes" if the results are accompanied by error bars, confidence intervals, or statistical significance tests, at least for the experiments that support the main claims of the paper.
        \item The factors of variability that the error bars are capturing should be clearly stated (for example, train/test split, initialization, random drawing of some parameter, or overall run with given experimental conditions).
        \item The method for calculating the error bars should be explained (closed form formula, call to a library function, bootstrap, etc.)
        \item The assumptions made should be given (e.g., Normally distributed errors).
        \item It should be clear whether the error bar is the standard deviation or the standard error of the mean.
        \item It is OK to report 1-sigma error bars, but one should state it. The authors should preferably report a 2-sigma error bar than state that they have a 96\% CI, if the hypothesis of Normality of errors is not verified.
        \item For asymmetric distributions, the authors should be careful not to show in tables or figures symmetric error bars that would yield results that are out of range (e.g. negative error rates).
        \item If error bars are reported in tables or plots, The authors should explain in the text how they were calculated and reference the corresponding figures or tables in the text.
    \end{itemize}

\item {\bf Experiments Compute Resources}
    \item[] Question: For each experiment, does the paper provide sufficient information on the computer resources (type of compute workers, memory, time of execution) needed to reproduce the experiments?
    \item[] Answer: \answerYes{} 
    \item[] Justification: We report the details about the compute resources in Section \ref{sec:resourse}
    \item[] Guidelines:
    \begin{itemize}
        \item The answer NA means that the paper does not include experiments.
        \item The paper should indicate the type of compute workers CPU or GPU, internal cluster, or cloud provider, including relevant memory and storage.
        \item The paper should provide the amount of compute required for each of the individual experimental runs as well as estimate the total compute. 
        \item The paper should disclose whether the full research project required more compute than the experiments reported in the paper (e.g., preliminary or failed experiments that didn't make it into the paper). 
    \end{itemize}
    
\item {\bf Code Of Ethics}
    \item[] Question: Does the research conducted in the paper conform, in every respect, with the NeurIPS Code of Ethics \url{https://neurips.cc/public/EthicsGuidelines}?
    \item[] Answer: \answerYes{} 
    \item[] Justification: Our research conducted in the paper conform, in every respect, with the NeurIPS Code of Ethics.
    \item[] Guidelines:
    \begin{itemize}
        \item The answer NA means that the authors have not reviewed the NeurIPS Code of Ethics.
        \item If the authors answer No, they should explain the special circumstances that require a deviation from the Code of Ethics.
        \item The authors should make sure to preserve anonymity (e.g., if there is a special consideration due to laws or regulations in their jurisdiction).
    \end{itemize}

\item {\bf Broader Impacts}
    \item[] Question: Does the paper discuss both potential positive societal impacts and negative societal impacts of the work performed?
    \item[] Answer: \answerYes{} 
    \item[] Justification: 
    We have discussed the potential positive societal impacts in section \ref{sec: conclusion}. It has no negative societal impact.
    \item[] Guidelines:
    \begin{itemize}
        \item The answer NA means that there is no societal impact of the work performed.
        \item If the authors answer NA or No, they should explain why their work has no societal impact or why the paper does not address societal impact.
        \item Examples of negative societal impacts include potential malicious or unintended uses (e.g., disinformation, generating fake profiles, surveillance), fairness considerations (e.g., deployment of technologies that could make decisions that unfairly impact specific groups), privacy considerations, and security considerations.
        \item The conference expects that many papers will be foundational research and not tied to particular applications, let alone deployments. However, if there is a direct path to any negative applications, the authors should point it out. For example, it is legitimate to point out that an improvement in the quality of generative models could be used to generate deepfakes for disinformation. On the other hand, it is not needed to point out that a generic algorithm for optimizing neural networks could enable people to train models that generate Deepfakes faster.
        \item The authors should consider possible harms that could arise when the technology is being used as intended and functioning correctly, harms that could arise when the technology is being used as intended but gives incorrect results, and harms following from (intentional or unintentional) misuse of the technology.
        \item If there are negative societal impacts, the authors could also discuss possible mitigation strategies (e.g., gated release of models, providing defenses in addition to attacks, mechanisms for monitoring misuse, mechanisms to monitor how a system learns from feedback over time, improving the efficiency and accessibility of ML).
    \end{itemize}
    
\item {\bf Safeguards}
    \item[] Question: Does the paper describe safeguards that have been put in place for responsible release of data or models that have a high risk for misuse (e.g., pretrained language models, image generators, or scraped datasets)?
    \item[] Answer: \answerNA{} 
    \item[] Justification: \justificationTODO{}
    \item[] Guidelines:
    \begin{itemize}
        \item The answer NA means that the paper poses no such risks.
        \item Released models that have a high risk for misuse or dual-use should be released with necessary safeguards to allow for controlled use of the model, for example by requiring that users adhere to usage guidelines or restrictions to access the model or implementing safety filters. 
        \item Datasets that have been scraped from the Internet could pose safety risks. The authors should describe how they avoided releasing unsafe images.
        \item We recognize that providing effective safeguards is challenging, and many papers do not require this, but we encourage authors to take this into account and make a best faith effort.
    \end{itemize}

\item {\bf Licenses for existing assets}
    \item[] Question: Are the creators or original owners of assets (e.g., code, data, models), used in the paper, properly credited and are the license and terms of use explicitly mentioned and properly respected?
    \item[] Answer: \answerYes{} 
    \item[] Justification: All the original owners of assets (e.g., code, data, models) used in this paper are properly credited, and the license and terms of use are explicitly mentioned and properly respected.

    \item[] Guidelines:
    \begin{itemize}
        \item The answer NA means that the paper does not use existing assets.
        \item The authors should cite the original paper that produced the code package or dataset.
        \item The authors should state which version of the asset is used and, if possible, include a URL.
        \item The name of the license (e.g., CC-BY 4.0) should be included for each asset.
        \item For scraped data from a particular source (e.g., website), the copyright and terms of service of that source should be provided.
        \item If assets are released, the license, copyright information, and terms of use in the package should be provided. For popular datasets, \url{paperswithcode.com/datasets} has curated licenses for some datasets. Their licensing guide can help determine the license of a dataset.
        \item For existing datasets that are re-packaged, both the original license and the license of the derived asset (if it has changed) should be provided.
        \item If this information is not available online, the authors are encouraged to reach out to the asset's creators.
    \end{itemize}

\item {\bf New Assets}
    \item[] Question: Are new assets introduced in the paper well documented and is the documentation provided alongside the assets?
    \item[] Answer: \answerYes{} 
    \item[] Justification: Yes, we provide the anonymous link to the code.
    \item[] Guidelines:
    \begin{itemize}
        \item The answer NA means that the paper does not release new assets.
        \item Researchers should communicate the details of the dataset/code/model as part of their submissions via structured templates. This includes details about training, license, limitations, etc. 
        \item The paper should discuss whether and how consent was obtained from people whose asset is used.
        \item At submission time, remember to anonymize your assets (if applicable). You can either create an anonymized URL or include an anonymized zip file.
    \end{itemize}

\item {\bf Crowdsourcing and Research with Human Subjects}
    \item[] Question: For crowdsourcing experiments and research with human subjects, does the paper include the full text of instructions given to participants and screenshots, if applicable, as well as details about compensation (if any)? 
    \item[] Answer: \answerNA{} 
    \item[] Justification: Our work does not involve 
 any crowdsourcing nor research with human subjects.
    \item[] Guidelines:
    \begin{itemize}
        \item The answer NA means that the paper does not involve crowdsourcing nor research with human subjects.
        \item Including this information in the supplemental material is fine, but if the main contribution of the paper involves human subjects, then as much detail as possible should be included in the main paper. 
        \item According to the NeurIPS Code of Ethics, workers involved in data collection, curation, or other labor should be paid at least the minimum wage in the country of the data collector. 
    \end{itemize}

\item {\bf Institutional Review Board (IRB) Approvals or Equivalent for Research with Human Subjects}
    \item[] Question: Does the paper describe potential risks incurred by study participants, whether such risks were disclosed to the subjects, and whether Institutional Review Board (IRB) approvals (or an equivalent approval/review based on the requirements of your country or institution) were obtained?
    \item[] Answer: \answerNA{} 
    \item[] Justification:  The paper does not involve crowdsourcing or research with human subjects.
    \item[] Guidelines:
    \begin{itemize}
        \item The answer NA means that the paper does not involve crowdsourcing nor research with human subjects.
        \item Depending on the country in which research is conducted, IRB approval (or equivalent) may be required for any human subjects research. If you obtained IRB approval, you should clearly state this in the paper. 
        \item We recognize that the procedures for this may vary significantly between institutions and locations, and we expect authors to adhere to the NeurIPS Code of Ethics and the guidelines for their institution. 
        \item For initial submissions, do not include any information that would break anonymity (if applicable), such as the institution conducting the review.
    \end{itemize}

\end{enumerate}

\end{document}